\newtheorem*{rep@theorem}{\rep@title}
\newcommand{\newreptheorem}[2]{%                                                
\newenvironment{rep#1}[1]{%                                                     
 \def\rep@title{#2 \ref{##1}}%                                                  
 \begin{rep@theorem}}%                                                          
 {\end{rep@theorem}}}
\newcommand{\ALGtikzmarkcolor}{black}% customise this, if you want
\newcommand{\ALGtikzmarkextraindent}{4pt}% customise this, if you want
\newcommand{\ALGtikzmarkverticaloffsetstart}{-1ex}% customise this, if you want
\newcommand{\ALGtikzmarkverticaloffsetend}{-.5ex}% customise this, if you want
\newcounter{ALG@tikzmark@tempcnta}
\newcommand\ALG@tikzmark@start{%
    \global\let\ALG@tikzmark@last\ALG@tikzmark@starttext%
    \expandafter\edef\csname ALG@tikzmark@\theALG@nested\endcsname{\theALG@tikzmark@tempcnta}%
    \tikzmark{ALG@tikzmark@start@\csname ALG@tikzmark@\theALG@nested\endcsname}%
    \addtocounter{ALG@tikzmark@tempcnta}{1}%
}
\def\ALG@tikzmark@starttext{start}
\newcommand\ALG@tikzmark@end{%
    \ifx\ALG@tikzmark@last\ALG@tikzmark@starttext
        % ignore this, the block was opened then closed directly without any other blocks in between (so just a \State basically)
        % don't draw a vertical line here
    \else
        \tikzmark{ALG@tikzmark@end@\csname ALG@tikzmark@\theALG@nested\endcsname}%
        \tikz[overlay,remember picture] \draw[\ALGtikzmarkcolor] let \p{S}=($(pic cs:ALG@tikzmark@start@\csname ALG@tikzmark@\theALG@nested\endcsname)+(\ALGtikzmarkextraindent,\ALGtikzmarkverticaloffsetstart)$), \p{E}=($(pic cs:ALG@tikzmark@end@\csname ALG@tikzmark@\theALG@nested\endcsname)+(\ALGtikzmarkextraindent,\ALGtikzmarkverticaloffsetend)$) in (\x{S},\y{S})--(\x{S},\y{E});%
    \fi
    \gdef\ALG@tikzmark@last{end}%
}
\apptocmd{\ALG@beginblock}{\ALG@tikzmark@start}{}{\errmessage{failed to patch}}
\pretocmd{\ALG@endblock}{\ALG@tikzmark@end}{}{\errmessage{failed to patch}}
\crefname{section}{Sec.}{Secs.}
\crefname{lemma}{Lem.}{Lems.}
\crefname{proposition}{Prop.}{Props.}
\crefname{corollary}{Cor.}{Cors.}
\crefname{theorem}{Thm.}{Thms.}
\crefname{example}{Ex.}{Exs.}
\crefname{assumption}{Assump.}{Assumps.}
\crefname{equation}{Eq.}{Eqs.}
\crefname{definition}{Def.}{Defs.}
\crefname{appendix}{App.}{Apps.}
\definecolor{darkblue}{rgb}{0.0,0.0,0.55}
\newtheorem{proposition}{Proposition}[section]
\theoremstyle{definition}
\newtheorem{remark}{Remark}
\newtheorem{example}{Example}
\newtheorem{definition}{Definition}[section]
\DeclareMathOperator*{\argmin}{argmin}
\DeclarePairedDelimiterX{\bdx}[2]{[}{]}{%
  #1\;\delimsize\|\;#2%
}
\newcommand{\bd}{D\bdx}
\newcommand{\E}{\mathbb E}
\newcommand{\V}{\mathbb V}
\newcommand{\R}{\mathbb R}
\newcommand{\deq}{:=}
\newcommand{\dm}{\mathcal E}
\newcommand{\dv}{\mathcal V}
\newcommand{\iid}{\textit{i.i.d.}\xspace}
\newcommand{\eg}{\textit{e.g.}\xspace}
\newcommand{\nlsum}{\sum\nolimits}
\newcommand{\kl}{\textup{KL}}
\newcommand{\scal}[2]{\langle #1, #2\rangle}
\newcommand{\cala}{\mathcal{A}}
\newcommand{\eq}[1]{\begin{equation}#1\end{equation}}
\newcommand{\p}[1]{({#1})}
\newcommand{\norm}[1]{\lVert #1 \rVert}
\title{Understanding the bias-variance tradeoff \\of Bregman divergences}
\author{
        \name\!\!\! Ben Adlam\thanks{Equal contribution.} \email{adlam@google.com}\\
        \name Neha Gupta$^*$\thanks{Work done at Google.} \email{nehagupta@cs.stanford.edu}\\
        \name Zelda Mariet$^*$ \email{zmariet@google.com}\\
        \name Jamie Smith$^*$ \email{jamieas@google.com}\\
%  \addr{Google}
}
\date{}
\begin{document}

\maketitle

\begin{abstract}
    This paper builds upon the work of~\citet{generalized_bvd}, which generalized the bias variance tradeoff to any Bregman divergence loss function. \citet{generalized_bvd} showed that for Bregman divergences, the bias and variances are defined with respect to a \emph{central label}, defined as the mean of the label variable, and a \emph{central prediction}, of a more complex form. We show that, similarly to the label, the central prediction can be interpreted as the mean of a random variable, where the mean operates in a dual space defined by the loss function itself.  Viewing the bias-variance tradeoff through operations taken in dual space, we subsequently derive several results of interest. In particular, (a) the variance terms satisfy a generalized law of total variance; (b) if a source of randomness cannot be controlled, its contribution to the bias and variance has a closed form; (c) there exist natural ensembling operations in the label and prediction spaces which reduce the variance and do not affect the bias.
    \vspace{-.5em}
\end{abstract}

\section{Introduction}
\vspace{-.5em}
In machine learning, an algorithm $\cala$ uses data sampled from some unknown distribution $\mu$ to learn an approximation of $\mu$. For supervised learning, we assume the samples from $\mu$ are pairs $(X_i, Y_i)$ drawn \iid from $\mu$. The input-label pairs $(X_i, Y_i)$ form the \emph{training set} $T$ used by $\mathcal A$ to produce a predictive function $f$ that approximates the conditional distribution $(Y \mid X)$ of a label $Y$ given an input $X$.
% , \ie, $f(X)$ is approximately distributed as $(Y|X)$ for new sample $X \sim \mu$. 

Components of $Y$ may be unpredictable from the information in $X$, even with a perfect predictor. This results in the \emph{Bayes}, or irreducible, error (see, \eg,~\citep{hastie01statisticallearning}). However, even ignoring the Bayes error, algorithms rarely learn the conditional distribution $(Y\mid X)$ exactly. Thus, we must introduce a loss function $L$ to measure the resulting mistakes. Often, the loss $L$ is used by the algorithm $\cala$ to select the parameters of the model $f$. 

%Leaving aside computational questions, 
% This loss function $L$ also provides us with a yardstick with which we can compare different algorithms. % $\cala$ is a good algorithm? We make some assumptions on $\mu$ and ask how well $\cala$ does under these assumptions. 
Mathematically, the predictive function $f$ returned by $\cala$ is a stochastic process that depends on the training set $T$.\footnote{If $\cala$ is a randomized algorithm (neural network,  random forest...), it may depend on additional random variables.} Thus, we measure how close the random variable $f(X; T)$ is to the target random variable $(Y|X)$ as% , The loss $L$ we introduced previously is our measuring stick for how close the random variable $f(X)$ is to the target random variable $(Y|X)$:
\eq{ \label{eq_loss_at_X}
    \E_{Y} [L(Y, f(X; T)) \mid X,T].
}
Often, rather than the punctual loss for an input $X$, we care about the average performance over a test set:
\eq{ \label{eq_loss}
    \E_{X,Y} [L(Y, f(X; T)) \mid T].
}
%by the law of iterated expectation. 
Typically, the expectation in \cref{eq_loss} is over the distribution $\mu$ that generates the training set; this assumption can be used to derive generalization guarantees \cite{pac,smale}. This is not always the case: under \emph{distribution shift}, the distribution over the test points is $\mu' \neq \mu$. A particular way the distribution can change, called covariate shift, keeps the conditional distribution $(Y|X)$ fixed but changes the marginal distribution of $X$ \cite{SHIMODAIRA2000227}.

It is important to appreciate that the term in \eqref{eq_loss} is a random variable, since $f(X; T)$ depends on the training set $T$ and any other randomness used in the algorithm $\cala$. For example, we might be interested in $\cala$'s average performance with respect to the training set, $\E_{T,X,Y} [L(Y, f(X; T))]$.

The bias-variance decomposition is a foundational way to decompose the expected loss to understand different sources of error. Consider the case where the loss function is the Euclidean squared error $L(x,y) \deq \norm{x-y}^2$ for $x,y\in\R^d$. When a prediction $f(x; T)$ at input $x$ depends only on the training set $T$, we can write
\begin{align}
     \E_{T,Y} \norm{f(x; T) - Y }^2 =&~  \E_Y \norm{Y - \E_Y Y }^2 &&\textit{(Bayes error)}\notag  \\ 
     &+ \norm{ \E_Y Y - \E_T f(x; T) }^2 && \textit{(Bias)} \label{eq_bv_decomp}\\
     &+ \E_T \norm{f(x; T) - \E_T f(x; T) }^2. && \textit{(Variance)} \notag
\end{align}

\begin{figure}
    \centering
    \begin{subfigure}{.24\textwidth}
    \includegraphics[width=\textwidth]{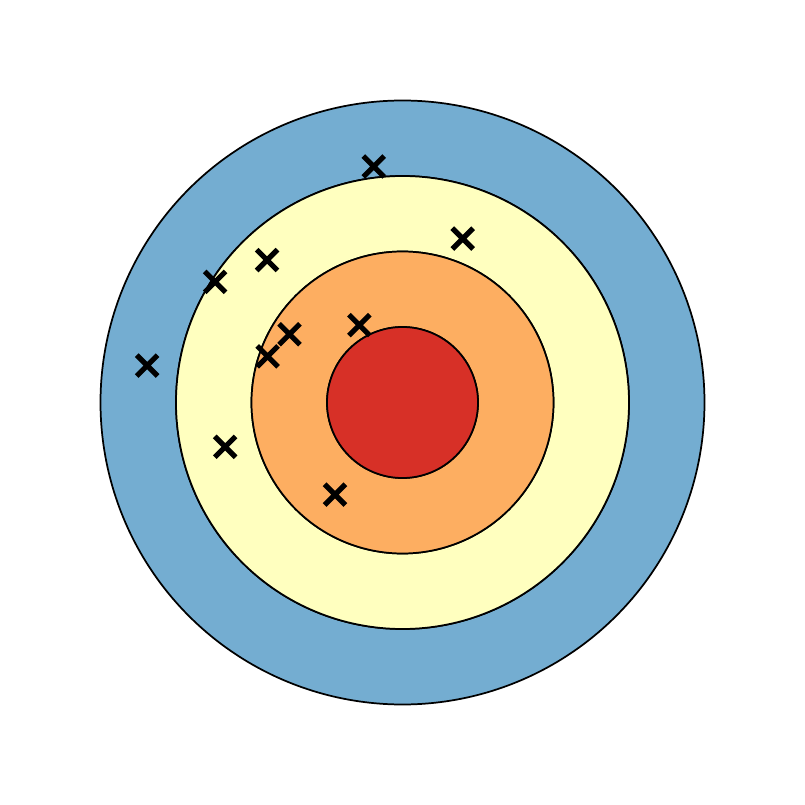}
    \caption{High bias and variance}
    \end{subfigure}
    \begin{subfigure}{.24\textwidth}
    \includegraphics[width=\textwidth]{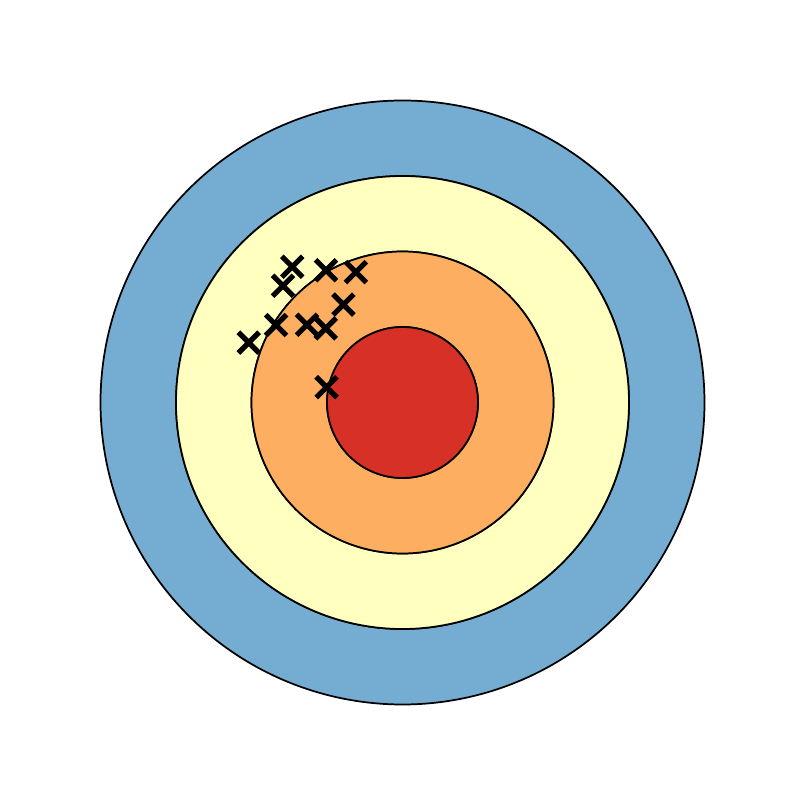}
    \caption{High bias}
    \end{subfigure}
    \begin{subfigure}{.24\textwidth}
    \includegraphics[width=\textwidth]{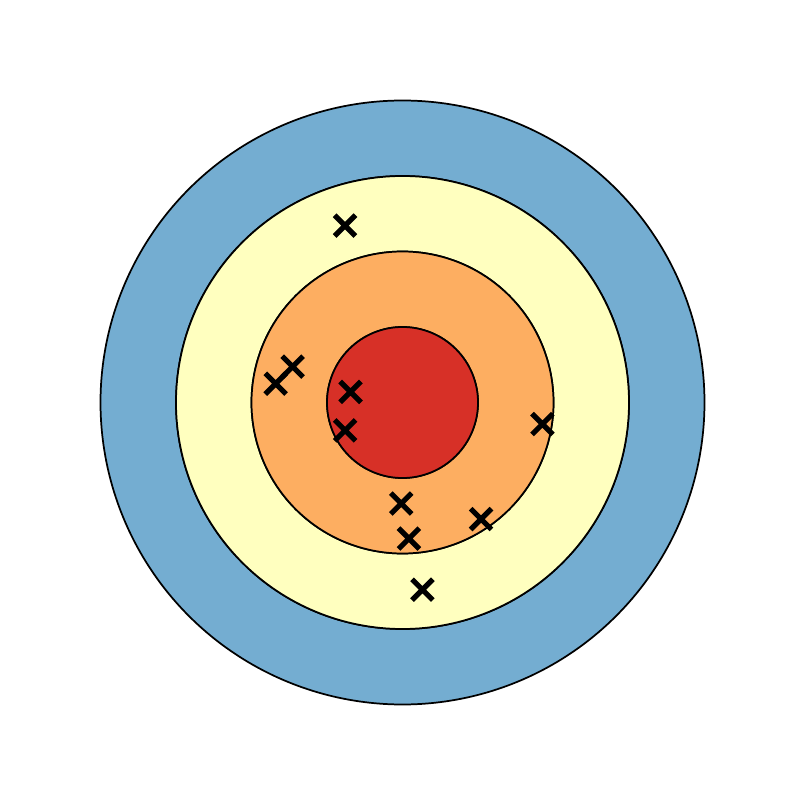}
    \caption{High variance}
    \end{subfigure}
    \begin{subfigure}{.24\textwidth}
    \includegraphics[width=\textwidth]{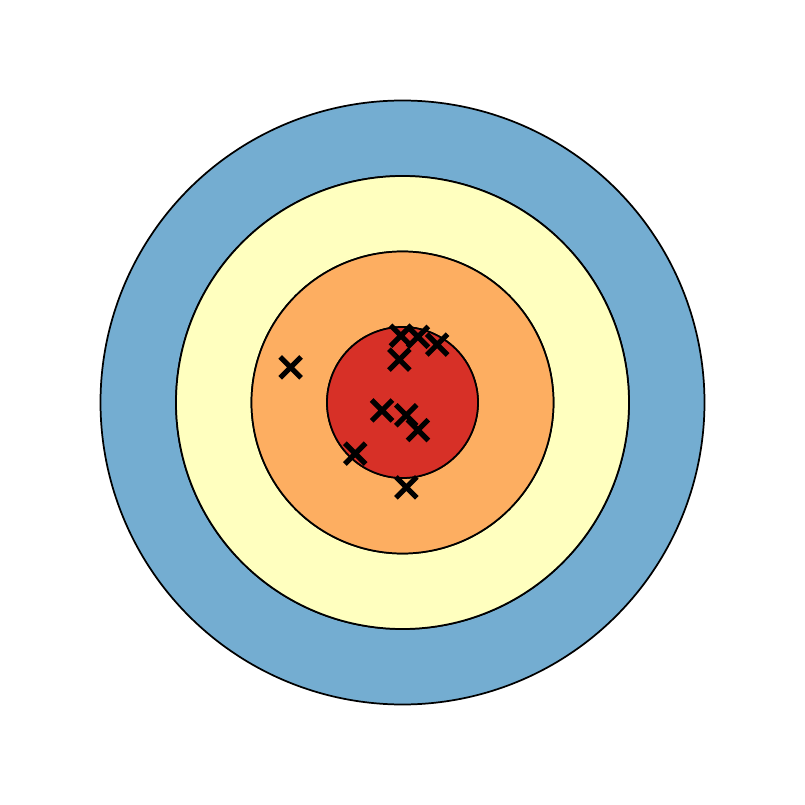}
    \caption{Low bias and variance}
    \end{subfigure}
    \caption{An illustration of the bias and variance terms for a deterministic label  defined by the bullseye $Y=(0,0)$, using draws of normal distributions with different means and standard deviations.}
    \vspace{-1em}
    \label{fig:darts}
\end{figure}

% \begin{figure}
%     \centering
%     \includegraphics[width=.4\linewidth]{figs/gridBiasVariance.pdf}
%     \caption{TODO: Update image with labels.
%     In each disk, we plot 10 i.i.d. samples from a different distribution. Intuitively we have an idea how close each of the four distributions is to zero, i.e. the bullseye. Visually, the distribution in the top-left disk is furthest from the bullseye, while the distribution in bottom-right disk is closest to the the bullseye. However, there are clearly different ways a distribution can be ``far'' from the bullseye. \textbf{(top row)} These two distributions are spread out and have high variance. \textbf{(bottom row)} These two distributions are concentrated about a central point and have low variance. \textbf{(left column)} These two distributions are biased away from zero, which is suggested by the average of the samples missing the bullseye. \textbf{(right column)} These distributions are unbiased, which is suggested by the average of the samples being very close to the bullseye.}
%     \label{fig_bv_diagram}
% \end{figure}

The decomposition in \cref{eq_bv_decomp} is specific to the Euclidean squared error. This specificity appears in two ways: firstly, all three terms are defined using Euclidean distances; secondly, the central terms $\E Y$ and $\E f(X; T)$ of \cref{eq_bv_decomp} are all defined by means of random variables. Nonetheless, the above interpretation of the sources of error is more general, and decomposition~\eqref{eq_bv_decomp} has been extended to broader classes of loss functions~\citep{generalized_bvd, Domingos00aunified}.

Building upon~\citet{generalized_bvd}, we consider the bias-variance decomposition for Bregman divergence losses; this class of functions includes standard losses in machine learning applications such as the KL divergence and Mahalanobis distances. Bregman divergences are not necessarily symmetric; as a consequence, bias-variance decompositions must treat the sources of randomness in the labels $Y$ differently than those in the predictions $f(X)$. In particular, the central term for predictions takes the less approachable form $\argmin_z \E_X L(z, f(X))$. 

The opaque form taken by the central prediction has severely limited the theoretical analysis of models for which the Euclidean squared loss is ill-suited, including the entirety of models used for classification tasks. In this work, we show that manipulating the central prediction is trivialized by reparameterizing the central prediction as the \emph{primal form} of the expected prediction in a dual space defined by the loss of interest. Using this reformulation, we characterize the behavior of bias and variance terms for arbitrary Bregman divergences, aiming to provide a unified understanding of how sources of randomness affect  ML algorithms.

% As our analysis does not require any further the rest of this paper, we consider the bias-variance decomposition to additional loss function that are also used in practice, such as the cross entropy loss. However, a
Albeit motivated by machine learning, the bias variance decomposition applies to any pair of random variables. To emphasize this, we will simply use $X$ rather than $f(X)$ to denote predictions. $X$ is a random variable that may on other variables (such as the training set $T$). As nonsymmetric losses $L(Y, X)$ affect variables $X$ and $Y$ differently, we will continue to refer to $Y$ as the \emph{label} and to $X$ as the \emph{prediction}.  %Note that \cref{eq_bv_gen} has a specific way of measuring the difference between the random variables $X$ and $Y$ (namely, the mean-squared error). However, this error is not necessarily appropriate for all applications; the goal of this paper is to provide a in-depth understanding of the bias-variance analysis for the more general class of losses provided by Bregman divergences.

\subsection{Related work}
\citet{generalized_bvd} generalized the bias variance tradeoff first identified for the Euclidean loss in~\citep{geman1992} to any Bregman divergence loss function $L$. For a loss function $L(Y, X)$ measuring the loss between two random variables $Y$ and $X$, \citep{generalized_bvd} showed that the corresponding biases and variances exist with respect to a ``central label'', $y = \E Y$, and a ``central prediction'', $x = \argmin \E L(x, X)$. Interestingly, the specific bias, variance, and irreducible error terms for Bregman divergence functions defined in~\citet{generalized_bvd} were also identified in~\citep{Domingos00aunified} for the 0-1 loss, despite the 0-1 loss \emph{not} being a Bregman divergence.

The Bregman representative defined in~\citet{bregman-clustering} is closely related to the ``central'' label defined by~\citep{generalized_bvd}; in particular,~\citep[Theorem 1]{bregman-clustering} is a special case of the law of total variance in label space. More generally, Bregman divergences and operations in their associated dual space are instrumental to optimization techniques such as mirror descent and dual averaging~\citep{nemirovski1983problem,DBLP:journals/mp/Nesterov09,juditsky2021unifying}.

Finally, the bias-variance tradeoff has been a fruitful venue for understanding the behavior of modern ML models~\citep{adlam2020,dascoli2020double,neal2019modern}. Due to the prevalence of the KL divergence in classification tasks, it is one of the few Bregman divergences for which the bias variance tradeoff has been specifically analyzed (\eg, \citep{yang2020rethinking}). The fact that the bias remains unchanged when averaging predictions in log-probability space has been mentioned briefly in~\citep{diettrich}, and log-probability averaging has been studied in many works, including~\citep{Brofos2019ABD,webb2020ensemble}.

\subsection{Contributions}
We build upon the decomposition by \citet{generalized_bvd}, which we reformulate by viewing the central prediction as the mean of a random variable in dual space. From this reformulation, we derive several results of interest. 

\begin{itemize}[leftmargin=*]
    \item The label and prediction variance terms satisfy a generalized law of total variance, which can be used to separate contributions of different sources of randomness. 
    \item Estimates of the bias and variance that condition on an external source of randomness overestimate the total bias and underestimate the total variance by a fixed quantity that admits a closed form.
    \item There exists a closed-form operation to aggregate predictions which does not affect the bias and reduces the variance, allowing us to recover the behavior of model ensembling under the squared Euclidean error.
\end{itemize}

\section{Background on Bregman divergences}
We begin with some background material regarding Bregman divergences, which also serves to set our notation. We refer the reader to, \eg, \citep[\S 11.2]{cesa2006prediction} for more background and related Bregman divergences. 

%To illustrate this, recall \cref{fig_bv_diagram} where we were implicitly asking how close the samples were to the bullseye. As is familiar to anyone who has played darts, the different colored regions indicate a discrete score based on how close a point is to zero. Instead, we could measure the distance from zero continuously with squared-Euclidean distance as \cref{def_sq_dist}, see \cref{fig_divergences}(left). Notice that this notion of distance is radially symmetric. However, this need not be the case. Alternatively, we could measure distance as in \cref{fig_divergences}(middle). The distance from the zero in \cref{fig_divergences}(middle) is generated by $D((x,y), (0,0))$ as $(x,y)$ ranges over the unit disc in $\R^2$, where $D$ is a divergence. Observe that the score or distance generated by $D$ is quite different than the squared-Euclidean distance. Moreover, $D$ is an example of a divergence that is not symmetric: in \cref{fig_divergences}(right) we plot $D((0,0), (x,y))$ instead, which is clearly different than $D((x,y), (0,0))$. 

% \begin{figure}
%     \centering
%     \includegraphics[width=0.32\linewidth]{figs/f1.pdf}
%     \includegraphics[width=0.32\linewidth]{figs/f2.pdf}
%     \includegraphics[width=0.32\linewidth]{figs/f3.pdf}
%     \caption{Caption}
%     \label{fig_divergences}
% \end{figure}

\begin{definition} 
Let $\mathcal X$ be a closed, convex subset of $\mathbb R^d$. A function $D: \mathcal X \times \mathcal X \to \mathbb R$ is a Bregman divergence if there exists a strictly convex, differentiable function $F$ such that 
\begin{equation}
\label{eq:bd}
    D_F\bdx{y}{x} \deq F(y) - F(x) - \scal{\nabla F(x)}{y-x}.
\end{equation}
\end{definition}
When a divergence's originating function $F$ is relevant, we will write the divergence as $D_F$.

\subsection{General properties}
Divergences are a generalization of the notion of distance, similar but less restrictive than metrics: 
\begin{proposition}[Nonnegativity]
$\forall x, y \in \mathcal X, \bd y x \geq 0$.
\end{proposition}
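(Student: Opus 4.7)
The plan is to deduce nonnegativity directly from the definition of $D_F$ by invoking the (strict) convexity of $F$. Recall that a differentiable function $F$ on a convex set $\mathcal X$ is convex if and only if it lies above all of its tangent hyperplanes, i.e.\ for every $x, y \in \mathcal X$,
\eq{
F(y) \geq F(x) + \scal{\nabla F(x)}{y-x},
}
with equality whenever $y=x$. This is the first-order characterization of convexity and is the only nontrivial fact needed.

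Rearranging this inequality gives
\eqs{
F(y) - F(x) - \scal{\nabla F(x)}{y-x} \geq 0,
}
and the left-hand side is exactly $D_F\bdx{y}{x}$ by the defining \cref{eq:bd}. Hence $D_F\bdx{y}{x} \geq 0$ for all $x,y\in\mathcal X$, as claimed.

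I would present the proof in two or three lines. The only subtlety worth flagging, though it is not part of the claimed statement, is that strict convexity of $F$ gives the sharper fact that $D_F\bdx{y}{x} = 0$ iff $y=x$; this is implicit in the use of the word \emph{divergence} but is not asserted by the proposition. There is no real obstacle here: the entire content of the proposition is the first-order convexity inequality, repackaged. If one wished to avoid quoting the first-order characterization, an alternative route would be to apply Taylor's theorem with integral remainder to $F$ along the segment from $x$ to $y$ and observe that the remainder integrand is nonnegative because the Hessian of a convex function is positive semidefinite; but this is heavier machinery than needed.
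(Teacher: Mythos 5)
Your proof is correct: the proposition is exactly the first-order characterization of convexity of $F$ rearranged, which is the standard argument (the paper states this as background without proof, deferring to \citep{cesa2006prediction}). Nothing is missing, and your aside that strict convexity yields the identity-of-indiscernibles refinement is accurate but, as you note, not required here.
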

\begin{proposition}[Identity of indiscernibles]
$\forall x, y \in \mathcal X, \bd y x =0 \iff x=y.$
\end{proposition}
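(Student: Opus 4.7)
The plan is to handle the two directions separately, with the $\Leftarrow$ direction being an immediate substitution and the $\Rightarrow$ direction being an application of strict convexity. The key observation I will exploit is that the Bregman divergence $D_F[y\|x]$ measures exactly the gap between $F(y)$ and its first-order (tangent) approximation at $x$, so strict positivity of the divergence for $y\neq x$ is synonymous with the tangent plane lying strictly below the graph of $F$ off the point of tangency --- a standard consequence of strict convexity and differentiability.

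First I would dispatch the easy direction: plugging $y=x$ into~\eqref{eq:bd} gives $F(x)-F(x)-\scal{\nabla F(x)}{0}=0$, which shows $x=y \Rightarrow D_F[y\|x]=0$.

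For the $\Rightarrow$ direction I would prove the contrapositive: assume $y\neq x$ and show $D_F[y\|x]>0$. Consider the one-dimensional restriction $h(t)\deq F(x+t(y-x))$ on $t\in[0,1]$. Since $F$ is strictly convex and differentiable and $y-x\neq 0$, $h$ is itself strictly convex and differentiable, hence $h'$ is strictly increasing. In particular $h'(t)>h'(0)=\scal{\nabla F(x)}{y-x}$ for every $t\in(0,1]$. Integrating over $[0,1]$ and using the fundamental theorem of calculus gives
\eqs{
F(y)-F(x) \;=\; h(1)-h(0) \;=\; \int_0^1 h'(t)\,dt \;>\; \scal{\nabla F(x)}{y-x},
}
which rearranges to $D_F[y\|x]>0$, as desired.

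The only step that requires any care is justifying the strict inequality in the integral. My preferred route is the monotonicity-of-$h'$ argument above, since it cleanly upgrades the familiar ``supporting hyperplane lies below the graph'' inequality (which only yields $D_F[y\|x]\geq 0$, the preceding proposition) to a strict version using strict convexity. An equivalent alternative, if one prefers to avoid integrating, is to apply the supporting-hyperplane inequality at the midpoint $m=\tfrac{x+y}{2}$ to both $x$ and $y$, add the two, and use strict midpoint convexity $F(m)<\tfrac12(F(x)+F(y))$ to conclude; but the monotone-gradient argument is the most transparent.
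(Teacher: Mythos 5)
Your proof is correct. Note that the paper itself gives no proof of this proposition---it is stated as standard background (the reader is referred to \citep[\S 11.2]{cesa2006prediction})---so there is nothing to compare against; your argument is the standard one. Both directions are handled properly: the forward substitution is immediate, and the contrapositive via the one-dimensional restriction $h(t)=F(x+t(y-x))$ is sound, since strict convexity plus differentiability makes $h'$ strictly increasing, so $F(y)-F(x)=\int_0^1 h'(t)\,dt>h'(0)=\scal{\nabla F(x)}{y-x}$, i.e.\ $\bd{y}{x}>0$ whenever $y\neq x$.
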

Contrary to metrics, Bregman divergences are not required to be symmetric. Rather than the triangle inequality, they follow a generalized triangle inequality where the third term can  be either positive or negative.
\begin{proposition}
  For any $x, y, z \in \mathcal X$, we have $\bd x z = \bd x y + \bd y z + \scal{\nabla F(y) - F(z)}{x-y}$.
\end{proposition}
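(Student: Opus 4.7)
The plan is to prove the identity by direct expansion from the definition of the Bregman divergence in \cref{eq:bd}, since there is no obvious structural trick required: both sides are linear combinations of $F$ evaluated at $x$, $y$, $z$ and inner products involving $\nabla F(y)$ and $\nabla F(z)$, so it should reduce to matching coefficients.

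Concretely, I would first write out $D_F\bdx{x}{z} = F(x) - F(z) - \scal{\nabla F(z)}{x-z}$, and separately write out
\als{
D_F\bdx{x}{y} + D_F\bdx{y}{z} &= \p{F(x) - F(y) - \scal{\nabla F(y)}{x-y}} \\
&\quad + \p{F(y) - F(z) - \scal{\nabla F(z)}{y-z}}.
}
The $F(y)$ terms cancel, leaving $F(x) - F(z) - \scal{\nabla F(y)}{x-y} - \scal{\nabla F(z)}{y-z}$. Subtracting this from $D_F\bdx{x}{z}$ therefore reduces everything to the inner-product part: the scalar $F(x) - F(z)$ appears on both sides, so I only need to show that
\eqs{
-\scal{\nabla F(z)}{x-z} - \p{-\scal{\nabla F(y)}{x-y} - \scal{\nabla F(z)}{y-z}} = \scal{\nabla F(y) - \nabla F(z)}{x-y}.
}

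The final step is to regroup the inner products: using bilinearity, $\scal{\nabla F(z)}{y-z} - \scal{\nabla F(z)}{x-z} = -\scal{\nabla F(z)}{x-y}$ by telescoping through the point $z$, and combining this with the $+\scal{\nabla F(y)}{x-y}$ term yields exactly $\scal{\nabla F(y) - \nabla F(z)}{x-y}$, as desired. There is no real obstacle here; the only thing to be careful about is keeping signs and the asymmetric arguments of $D_F\bdx{\cdot}{\cdot}$ straight, since the first argument sits inside the linear terms but is otherwise evaluated only through $F$. (I note in passing that the statement as written has a small typo: the second summand inside the inner product should be $\nabla F(z)$ rather than $F(z)$, which is what the computation above recovers.)
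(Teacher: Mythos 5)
Your proof is correct: the paper states this generalized triangle inequality as a standard background fact without proof, and the direct expansion from \cref{eq:bd} with cancellation of the $F(y)$ terms and regrouping of the inner products is exactly the canonical argument. You are also right that the statement contains a typo --- the inner product should read $\scal{\nabla F(y) - \nabla F(z)}{x-y}$, which is what your computation recovers.
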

% BDivergences have been particularly useful in statistics and machine learning as a way to measure how close two distributions are.\todo{cite}

\begin{figure}
    \centering
    \begin{subfigure}{.3\linewidth}
    \centering
      \includegraphics[width=.7\textwidth]{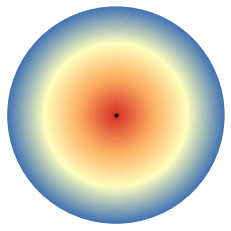}
      \caption{$F=\|x\|^2$, center at $(0, 0)$.\\\hphantom{a}}
      \label{fig:div_euclidean}
    \end{subfigure}\quad
    \begin{subfigure}{.3\linewidth}
    \centering
    \includegraphics[width=.7\textwidth]{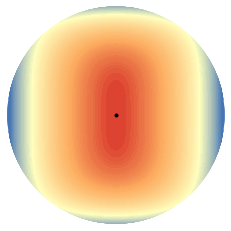}
      \caption{$F\!=\!-\log(1-x_0^2) -\log(1-x_1^4)$,\\\hphantom{(b)} center at $(0, 0)$.}
      \label{fig:div_custom}
    \end{subfigure}\quad
    \begin{subfigure}{.3\linewidth}
    \centering
    \includegraphics[width=.7\textwidth]{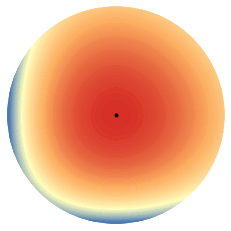}
      \caption{$F=x_0\log x_0 + x_1\log x_1$,\\\hphantom{(c)} center at $(0.5, 0.5)$.}
      \label{fig:div_entropy}
    \end{subfigure}
    \caption{Distance to the center over a disk for Bregman divergences of different convex functions $F$; the red-blue gradient indicates the corresponding distance to the center, from red (closest) to blue (furthest).}
    \label{fig_divergences}
\end{figure}

\begin{example}
  For any positive semi-definite matrix $A \in \mathbb R^{d \times d}$, the squared Mahalanobis distance $(x, y) \to (x-y)^\top A (x-y)$ is the Bregman divergence of $F(x) = x^\top Ax$.
\end{example}
\begin{example}
  The Kullback-Leibler divergence over the probability simplex $\{x \in \mathbb R^d_+ \mid \sum_i x_i = 1\}$, $(x, y) \to \sum_i x_i \log (y_i/x_i)$, is the Bregman divergence of $F(x) = \sum_i x_i \log x_i$.
\end{example}

In \cref{fig:div_euclidean}, the coloring indicates the Euclidean distance to $(0, 0)$. This distance is symmetric: the distance of any point $p=(x,y)$ to 0 is also the distance of $0$ to $p$. When we measure distances with non-symmetric functions over the disk, we obtain measures of distance which are no longer symmetric (\cref{fig:div_custom,fig:div_entropy}).

%The aim of the examples in \cref{fig_divergences} to illustrate the obvious point that performance is dictated by how it is measured. The implications of this point are actually quite extensive. We have already seen some geometrical implications, but now we turn to the special class of Bregman divergences that allow us to develop some statistical implications. Bregman divergences are divergences that make some additional assumptions, making them particularly relevant to machine learning application. 
\subsection{Bregman divergences and convexity}
Many important properties of Bregman divergences are due to the convexity of their originating function $F$. It is easy to verify that any Bregman divergence $D$ is convex in its first variable.
\begin{proposition}[Convexity in the first variable]
\label{prop:convexity}
It follows from the convexity of $F$ that a divergence $D_F$ is convex in its first variable, although not necessarily in its second~\citep{bregman-convex}.
\end{proposition}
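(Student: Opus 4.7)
The plan is to prove convexity in the first variable directly from the definition \eqref{eq:bd}, and dispense with the second claim by appealing to the counterexample already cited.

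First, I would fix $x \in \mathcal X$ and treat $D_F\bdx{\cdot}{x}$ as a function of $y$ alone. Expanding \eqref{eq:bd} and grouping the terms that do not involve $y$, I can write
\eqs{
\bd{y}{x} = F(y) - \scal{\nabla F(x)}{y} + C(x),
}
where $C(x) \deq \scal{\nabla F(x)}{x} - F(x)$ is a constant with respect to $y$. I would then observe that this expression is a sum of three functions of $y$: the strictly convex function $F$, the affine (hence convex) function $y \mapsto -\scal{\nabla F(x)}{y}$, and a constant. Since sums of convex functions are convex, and adding any convex function to a strictly convex one preserves strict convexity, this shows $\bd{y}{x}$ is in fact strictly convex in $y$, which is stronger than what the proposition claims.

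For the second part of the statement, the plan is to point out that the same argument fails when one tries to hold $y$ fixed: the term $F(x) + \scal{\nabla F(x)}{y-x}$ depends on $x$ both through $F$ and through $\nabla F$, so the dependence on $x$ is not merely a sum of $F$ and an affine term. Rather than grind through a general analysis, I would simply exhibit a concrete Bregman divergence failing convexity in its second argument; standard examples (e.g.\ the KL divergence restricted to appropriate intervals, or the divergences catalogued in \citep{bregman-convex}) suffice. The only (minor) obstacle is choosing a clean counterexample that can be verified in a few lines, e.g.\ by computing the Hessian of $x \mapsto \bd{y}{x}$ for a suitable one-dimensional $F$ and showing it takes negative values for some choice of $y$.

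Overall, the main content of the proof is the one-line observation that $\bd{y}{x}$ is, up to an additive affine function of $y$, equal to $F(y)$ itself; this is essentially a restatement of the fact that a Bregman divergence measures the gap between $F$ and its tangent at $x$. No substantive obstacle arises, which matches the paper's remark that this property is easy to verify.
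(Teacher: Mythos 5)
Your argument is correct: writing $\bd{y}{x} = F(y) - \scal{\nabla F(x)}{y} + C(x)$ exhibits the divergence as a strictly convex function of $y$ plus an affine one, which is exactly the ``easy to verify'' observation the paper relies on (it gives no explicit proof), and deferring the failure of convexity in the second argument to the cited counterexamples matches the paper's treatment. Nothing is missing.
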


The Bregman divergence of the convex conjugate $F^*$ of $F$ will also be of particular importance. We recall that the convex conjugate of a convex function $F$ is defined as
\begin{equation*}
    \label{eq:convex-conjugate}
    F^*(x^*) = \sup_{x \in \mathcal X} \scal{x^*}x - F(x).
\end{equation*}
For any convex and differentiable function $F$, we recall the following properties, writing $x^* \deq \nabla F(x)$. 
\begin{proposition}
\label{prop:prereq2}
  $(F^*)^* = F$, and for any $x \in \mathcal X$, $x = (x^*)^*  = \nabla F^*(\nabla F(x))$. 
\end{proposition}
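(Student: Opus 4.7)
The plan is to derive both claims from the Fenchel equality $F(x) + F^*(\nabla F(x)) = \scal{\nabla F(x)}{x}$, which will serve as the central identity. I would first establish Fenchel's inequality $F(y) + F^*(x^*) \geq \scal{x^*}{y}$ for all admissible $x^*, y$, which is immediate from the definition of the convex conjugate. To upgrade this to equality when $x^* = \nabla F(y)$, I would note that the objective $z \mapsto \scal{x^*}{z} - F(z)$ in the definition of $F^*(x^*)$ is strictly concave (since $F$ is strictly convex), so it admits at most one maximizer, and the first-order condition reads $x^* = \nabla F(z)$. Choosing $z = y$ and $x^* = \nabla F(y)$ satisfies this condition, so the supremum is attained at $z = y$, giving $F^*(\nabla F(y)) = \scal{\nabla F(y)}{y} - F(y)$.

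With the Fenchel equality in hand, the claim $(F^*)^* = F$ follows from a two-sided bound. From Fenchel's inequality I can rewrite $F(x) \geq \scal{x^*}{x} - F^*(x^*)$ for every $x^*$, and taking the supremum over $x^*$ yields $F(x) \geq F^{**}(x)$. For the reverse, I plug in the specific choice $x^* = \nabla F(x)$ into the supremum defining $F^{**}(x)$ and apply the Fenchel equality to obtain $F^{**}(x) \geq \scal{\nabla F(x)}{x} - F^*(\nabla F(x)) = F(x)$. Combining the two inequalities gives $F^{**}(x) = F(x)$.

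For the identity $x = \nabla F^*(\nabla F(x))$, I would argue via the envelope/Danskin theorem applied to $F^*(x^*) = \sup_z\pa{\scal{x^*}{z} - F(z)}$. Because the inner objective is strictly concave and the maximizer is the unique $z$ satisfying $\nabla F(z) = x^*$, the supremum is smooth in $x^*$ and its gradient equals the unique argmax, yielding $\nabla F^*(x^*) = z(x^*)$ with $\nabla F(z(x^*)) = x^*$. Specializing to $x^* = \nabla F(x)$ gives $z(x^*) = x$, i.e.\ $\nabla F^*(\nabla F(x)) = x$, and the symmetric statement $(x^*)^* = x$ then follows by the definition $x^* = \nabla F(x)$ together with $(F^*)^* = F$.

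The main obstacle is the regularity input: one must ensure the supremum in $F^*$ is attained and that $F^*$ is differentiable on the relevant domain. For general convex $F$, $F^*$ can fail to be differentiable, so the crucial step is to leverage the strict convexity and differentiability of $F$ to conclude that $\nabla F$ is an injective continuous map onto the image domain and that $F^*$ is differentiable there with $\nabla F^* = (\nabla F)^{-1}$. Once this Legendre-type correspondence is in place, both claims reduce to the algebraic manipulation above.
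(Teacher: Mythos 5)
The paper does not prove this proposition itself---it defers to \citep[Chapter 11]{cesa2006prediction}---so there is no in-paper argument to compare against; your proof is the standard Legendre/conjugate-duality argument that such references contain, and it is essentially correct. The chain Fenchel inequality $\to$ Fenchel equality at $x^*=\nabla F(x)$ (via strict concavity of $z\mapsto \scal{x^*}{z}-F(z)$ and the first-order condition) $\to$ two-sided bound for $F^{**}=F$ $\to$ envelope/Danskin argument for $\nabla F^*(\nabla F(x))=x$ is sound, and you correctly identify the only real point of care: differentiability of $F^*$ and attainment of the supremum. Two small remarks. First, since $\mathcal X$ is a \emph{closed} convex set, the maximizer of $z\mapsto \scal{\nabla F(x)}{z}-F(z)$ may sit on the boundary of $\mathcal X$, where the first-order condition is a variational inequality rather than $\nabla F(z)=x^*$; this does not break your argument, because $z=x$ still satisfies that variational inequality trivially and strict concavity gives uniqueness, but it is worth saying so rather than invoking the unconstrained stationarity condition. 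Second, for the differentiability of $F^*$ you do not actually need the full Legendre correspondence: $F^*$ is a supremum of functions affine in $x^*$, so its subdifferential at $x^*$ is the set of maximizers, and uniqueness of the maximizer (from strict concavity) already yields differentiability with $\nabla F^*(x^*)$ equal to that maximizer---this is a cleaner way to close the regularity gap you flag, provided attainment of the supremum is guaranteed (e.g.\ $\mathcal X$ compact or $F$ coercive, which the paper implicitly assumes).
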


\begin{proposition}
\label{prop:bregman-dual}
  For any Bregman divergence $D_F$ and  $x, y \in \mathcal X$, $D_F\bdx x y = D_{F^*}\bdx {y^*} {x^*}$.
\end{proposition}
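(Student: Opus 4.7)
The plan is to expand both sides of the identity using the definition of Bregman divergence, and then use the two facts from Proposition \ref{prop:prereq2} together with the Fenchel–Young equality to match the terms. Throughout I will write $x^* = \nabla F(x)$ and $y^* = \nabla F(y)$, so by Proposition \ref{prop:prereq2} we have $\nabla F^*(x^*) = x$ and $\nabla F^*(y^*) = y$.

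First I would write out the right-hand side from the definition of $D_{F^*}$:
\begin{equation*}
D_{F^*}\bdx{y^*}{x^*} = F^*(y^*) - F^*(x^*) - \scal{\nabla F^*(x^*)}{y^* - x^*}.
\end{equation*}
Applying $\nabla F^*(x^*) = x$ gives
\begin{equation*}
D_{F^*}\bdx{y^*}{x^*} = F^*(y^*) - F^*(x^*) - \scal{x}{y^* - x^*}.
\end{equation*}

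The main step is then to eliminate $F^*$ using the Fenchel–Young equality, which holds at conjugate pairs: since $F$ is strictly convex and differentiable and $x^* = \nabla F(x)$, the supremum defining $F^*(x^*)$ is attained at $x$, so $F^*(x^*) = \scal{x^*}{x} - F(x)$, and similarly $F^*(y^*) = \scal{y^*}{y} - F(y)$. Substituting these two identities yields
\begin{equation*}
D_{F^*}\bdx{y^*}{x^*} = \bigl(\scal{y^*}{y} - F(y)\bigr) - \bigl(\scal{x^*}{x} - F(x)\bigr) - \scal{x}{y^*} + \scal{x}{x^*},
\end{equation*}
and the two $\scal{x}{x^*}$ terms cancel, leaving
\begin{equation*}
D_{F^*}\bdx{y^*}{x^*} = F(x) - F(y) + \scal{y^*}{y - x} = F(x) - F(y) - \scal{\nabla F(y)}{x-y} = D_F\bdx{x}{y}.
\end{equation*}

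The only potential obstacle is justifying the Fenchel–Young equality at $(x, x^*)$, which is not stated as its own proposition in the excerpt but follows immediately from the assumptions already in place: $F$ is strictly convex and differentiable, so the function $z \mapsto \scal{x^*}{z} - F(z)$ has its maximum characterized by the first-order condition $\nabla F(z) = x^*$, which is satisfied at $z = x$ by definition of $x^*$. If one prefers not to invoke this directly, the same identity can be extracted from Proposition \ref{prop:prereq2} by differentiating $(F^*)^* = F$, so no machinery beyond what the paper has already introduced is needed.
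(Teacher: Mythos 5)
The paper offers no proof of this proposition at all: it simply points the reader to Chapter~11 of \citet{cesa2006prediction} for the proofs of Propositions~\ref{prop:prereq2} and~\ref{prop:bregman-dual}. There is therefore nothing in the paper to compare your argument against. Your proof is correct and is the standard one: expanding $D_{F^*}\bdx{y^*}{x^*}$, substituting $\nabla F^*(x^*)=x$, and eliminating $F^*$ via the Fenchel--Young equality $F^*(x^*)=\scal{x^*}{x}-F(x)$ at the conjugate pairs does reduce the right-hand side to $F(x)-F(y)-\scal{\nabla F(y)}{x-y}=D_F\bdx{x}{y}$, and your justification of the Fenchel--Young step (the concave function $z\mapsto\scal{x^*}{z}-F(z)$ has vanishing gradient at $z=x$, hence attains its supremum there) uses only the strict convexity and differentiability the paper already assumes. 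This is a self-contained derivation where the paper gives only a citation, so it is a strict improvement in completeness.
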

 See \cite[Chapter 11]{cesa2006prediction} for a discussion and the proofs of the propositions \ref{prop:prereq2} and \ref{prop:bregman-dual}.
\begin{proposition}
  \label{prop:minimizers}
  Let $X$ be a random variable over $\mathcal X$, and $D_F$ be any Bregman divergence over $\mathcal X  \times \mathcal X$. The minimizers of the expected divergence from and to $X$ satisfy the following equalities:
  \begin{enumerate}[label=(\roman*)]
      \item $z = \argmin_{z \in \mathcal X} \E_X D_F\bdx X z \iff z = \E X$\quad (c.f. Prop. 1 of \citep{bregman-clustering}). \label{eq:mean}
      \item $z = \argmin_{z \in \mathcal X} \E_X D_F \bdx z X \iff \nabla F(z) = \E[\nabla F(X)]$\quad (c.f. Lemma 1 of~\citep{generalized_bvd}).\label{eq:dual-mean}
  \end{enumerate}
\end{proposition}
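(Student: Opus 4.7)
The plan is to isolate the $z$-dependence in each expected divergence and then invoke strict convexity of $F$ to extract a unique minimizer.

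For part \ref{eq:mean}, I would expand \cref{eq:bd} and pass the expectation through the linear terms, obtaining
\eqs{
\E_X \bd X z \;=\; \E_X F(X) - F(z) - \scal{\nabla F(z)}{\E X - z}.
}
The $z$-dependent portion of this expression coincides exactly with that of $\bd{\E X}{z}$, so their difference $\E_X \bd X z - \bd{\E X}{z} = \E_X F(X) - F(\E X)$ is a constant in $z$. Minimizing $\E_X \bd X z$ over $z$ therefore reduces to minimizing the map $z \mapsto \bd{\E X}{z}$, which by nonnegativity and identity of indiscernibles has $z = \E X$ as its unique minimizer.

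For part \ref{eq:dual-mean}, I see two clean routes. The direct approach expands $\E_X \bd z X$ as $F(z) - \scal{\E_X \nabla F(X)}{z}$ plus terms independent of $z$; first-order optimality then yields $\nabla F(z) = \E_X \nabla F(X)$, and the Hessian in $z$ is $\nabla^2 F(z) \succ 0$ by strict convexity of $F$, so the critical point is the unique global minimum. The dual route leverages \cref{prop:bregman-dual} to rewrite $\bd z X = D_{F^*}\bdx{X^*}{z^*}$ and then applies part \ref{eq:mean} in the dual space with the random variable $X^* = \nabla F(X)$, yielding $z^* = \E_X X^*$, i.e., $\nabla F(z) = \E_X \nabla F(X)$. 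The dual route is attractive because it exposes the symmetry between the two parts and reuses the already-established identity.

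The only subtlety I anticipate is promoting the first-order condition to a unique minimum. This is immediate in both cases from strict convexity of $F$ (and hence of $F^*$), which makes the expected divergences strictly convex in $z$ for part \ref{eq:mean} and $\nabla F$ a bijection onto its image for part \ref{eq:dual-mean}. I do not foresee any real obstacle; both parts reduce to routine expansion plus a single invocation of strict convexity.
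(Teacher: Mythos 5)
Your proof is correct and matches the paper's treatment: the paper itself defers this proposition to Prop.~1 of \citet{bregman-clustering} and Lemma~1 of \citet{generalized_bvd}, whose arguments are exactly your constant-difference expansion for part \ref{eq:mean}, and your ``dual route'' for part \ref{eq:dual-mean} is precisely the derivation the paper gives at the start of Section~3, namely $\argmin_z \E D_F\bdx z X = (\E X^*)^*$ via \cref{prop:bregman-dual}. One small caution: strict convexity does not imply $\nabla^2 F(z) \succ 0$ (consider $x \mapsto x^4$ at the origin), so justify uniqueness in the direct route from strict convexity of $z \mapsto F(z) - \scal{\E \nabla F(X)}{z}$ itself rather than from the Hessian.
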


The minimizers $\ref{eq:mean}$ and $\ref{eq:dual-mean}$ of the expected divergence \emph{to} and from a random variable will be the central terms necessary to define bias-variance decompositions for arbitrary Bregman divergences.

% \begin{remark}
%   When $F$ is the negative entropy ($D_F \equiv \kl$), we have $\dm X = \softmax (\E \log X)$. % This fact is known --- \eg,~\citep{yang2020rethinking} --- but can be re-derived by noticing that the convex conjugate of the negative entropy $F^*$ is the log-sum-exp function.
% \end{remark}

\section{Bias-variance decomposition}
 The main obstacle in bias-variance decompositions for Bregman divergences lies in the form of the minimizer \emph{to} a random variable, $\argmin_z \E \bd z X$, which is less easily manipulated than the minimizer \emph{from} a random variable, $\argmin \E \bd Z z = \E X$. However, it turns out that $z \to \argmin_z \E D_F\bdx z X$ can also be formulated as a mean, once we consider the dual space defined by the convex conjugate of $F$:
 \[\argmin_{z \in \mathcal X} \E D_F\bdx z X = \argmin_{z \in \mathcal X} \E D_{F^*} \bdx {X^*} {z^*} = \left(\argmin_{z^* \in \mathcal X^*} D_{F^*}\bdx {X^*} {z^*}\right)^* = (\E X^*)^*.\]
 
This reformulation of the central prediction is crucial to our analysis, and, to the extent of our knowledge, novel. We introduce this form of the expected divergence to a random variable in the definition below.
\begin{definition}
  \label{def:dm}
  For a random variable $X$ over $\mathcal X$ and a Bregman divergence $D_F$ over $\mathcal X \times \mathcal X$, we define the \emph{dual mean} as the primal form of the mean of $X$ taken in dual space: $\dm X \deq (\E X^*)^* = \argmin_z \E \bd z X$.
\end{definition}
\begin{remark}
\label{rem:sym}
When $D$ is symmetric, $\E X = \dm X$ for any random variable $X$ over $\mathcal X$.
\end{remark}

\subsection{General statement}
Bias-variance decompositions reduce to breaking up the average gap between two independent random variables, $\E \bd Y X$. % Since the divergence function is not necessarily commutative, the ordering of $X$ and $Y$ is significant. When $D$ is a training or evaluation loss applied to a machine learning model, $Y$ will typically be the true label fo a training point, and $X$ the model's corresponding prediction (\eg, for the cross-entropy loss). 
For any Bregman divergence $D$, we know from~\citet{generalized_bvd} that the average loss $\E \bd Y X$ can be decomposed into three terms:
\begin{equation}
    \label{eq:bv}
    \E \bd Y X = \underbrace{\E \bd Y {\E Y}}_{\V Y} + \bd{\E Y} {\dm X} + \underbrace{\E \bd {\dm X}X}_{\dv X}.
\end{equation}
The first and third terms on the right-hand side of \cref{eq:bv} might both be interpreted as variances. Indeed, when $D$ is the Euclidean distance, both variance terms in \cref{eq:bv} take the classical forms $\V Y = \E(Y - \E Y)^2$ (Bayes error) and $\dv X = \E (X - \E X)^2$, as a consequence of Remark~\ref{rem:sym}. 

However, we insist once again upon the fact that in the general case, the order of $X$ and $Y$ affects which notion of variance is considered for each random variable: (a)  the ordering of the central term and its corresponding random variable is specific to each variance term; (b) the central term is either the mean $\E Y$ or the dual mean $\dm X$. To emphasize (b), we will refer to $\V Y$ as the \emph{primal variance}, and to $\dv X$ as the \emph{dual variance}.

\subsection{Properties of the variance terms}
Despite their non-standard forms, both primal and dual variances in \eqref{eq_bv_decomp} satisfy fundamental properties associated with the standard variance $\E (X - \E X)^2$.

\begin{proposition}[Non-negativity]
    The primal and dual variances are non-negative.
\end{proposition}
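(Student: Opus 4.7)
The plan is essentially immediate: both variance terms are expectations of Bregman divergences, and Bregman divergences are pointwise non-negative, so monotonicity of the expectation closes the argument. Concretely, I would invoke the non-negativity proposition stated earlier in the background section: for every $u, v \in \mathcal X$, $D_F\bdx u v \geq 0$.

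First, for the primal variance $\V Y = \E D_F\bdx Y {\E Y}$, I would note that $D_F\bdx Y {\E Y} \geq 0$ almost surely (with $\E Y \in \mathcal X$ by convexity of $\mathcal X$ and the fact that $Y$ takes values in $\mathcal X$), and conclude $\V Y \geq 0$ by monotonicity of expectation. Second, for the dual variance $\dv X = \E D_F\bdx {\dm X} X$, I would observe that $\dm X = (\E X^*)^* \in \mathcal X$ by \cref{prop:prereq2} (since $\nabla F^*$ maps the dual space $\mathcal X^*$ into $\mathcal X$), and then apply the same argument: $D_F\bdx{\dm X} X \geq 0$ pointwise, hence its expectation is non-negative.

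The only ``obstacle'', which is really a bookkeeping check rather than a difficulty, is confirming that the central terms $\E Y$ and $\dm X$ actually lie in $\mathcal X$ so that the divergences are well-defined. For $\E Y$ this is convexity of $\mathcal X$; for $\dm X$ this follows from $\dm X = \nabla F^*(\E \nabla F(X))$ together with \cref{prop:prereq2}, which guarantees $\nabla F^*$ lands in $\mathcal X$. After that, there is nothing else to verify.
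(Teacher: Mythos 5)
Your proof is correct and is exactly the argument the paper intends: the paper states this proposition without proof precisely because both variances are expectations of pointwise non-negative Bregman divergences. Your extra bookkeeping check that $\E Y$ and $\dm X = \nabla F^*(\E \nabla F(X))$ lie in $\mathcal X$ is a reasonable addition that the paper leaves implicit.
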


\begin{proposition}[Variance of constants]
    Let $c \in \mathcal X$ be a constant; then $\V[c]=0$ and $\dv[c] = 0$. Conversely, let $X$ be a random variable over $\mathcal X$ such that $\V X = 0$ (resp. $\dv X = 0$). Then $X$ is almost surely a constant.
\end{proposition}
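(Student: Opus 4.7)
The plan is to prove each of the four implications directly from properties already listed in the excerpt, namely nonnegativity of Bregman divergences, identity of indiscernibles, and \cref{prop:minimizers}. The argument is short and the main subtlety is just keeping track of which variable is the ``variable'' inside each divergence.

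First I would handle the forward direction. If $c$ is constant then $\E c = c$, so by identity of indiscernibles $\V[c] = \E D_F\bdx{c}{c} = 0$. For the dual variance, note that $c^* = \nabla F(c)$ is also a (deterministic) constant, hence $\E c^* = c^*$ and so
\eqs{\dm[c] = (\E c^*)^* = (c^*)^* = c,}
using \cref{prop:prereq2}. Therefore $\dv[c] = \E D_F\bdx{c}{c} = 0$.

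For the converse direction I would use nonnegativity pointwise. Suppose $\V X = \E D_F\bdx{X}{\E X} = 0$. Since $D_F\bdx{X}{\E X} \geq 0$ almost surely and has zero expectation, it vanishes almost surely, and identity of indiscernibles yields $X = \E X$ a.s., so $X$ is almost surely equal to the constant $\E X$. The dual case is analogous: $\dv X = \E D_F\bdx{\dm X}{X} = 0$ together with nonnegativity gives $D_F\bdx{\dm X}{X} = 0$ a.s., hence $X = \dm X$ a.s.; and $\dm X = (\E X^*)^*$ is a deterministic constant by construction, completing the proof.

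The only step that requires any thought is the dual converse, where one must remember that $\dm X$ is itself a constant (being a deterministic functional of the distribution of $X$) before concluding that $X$ equals a constant almost surely. I do not expect any genuine obstacle, since all three ingredients—nonnegativity, identity of indiscernibles, and the explicit form of $\dm X$ from \cref{def:dm}—have been established earlier.
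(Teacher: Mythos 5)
Your proof is correct; the paper states this proposition without an accompanying proof, and your argument---identity of indiscernibles plus $\dm c = (c^*)^* = c$ via \cref{prop:prereq2} for the forward direction, and pointwise nonnegativity forcing the divergence to vanish almost surely for the converse---is exactly the intended elementary verification. The one subtlety you flag, that $\dm X = (\E X^*)^*$ is a deterministic constant before one concludes $X$ is a.s.\ constant, is handled correctly.
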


Of particular interest to us is the law of total variance, and whether it applies to the primal and dual variances introduced above. Recall that, given two variables $Y$ and $Z$, the law of total variance decomposes the standard (Euclidean) variance as $\V Y = \E[\V (Y \mid Z)] + \V [\E(Y \mid Z)]$: the variance of $Y$ is the sum of the variances respectively \emph{unexplained} and \emph{explained} by $Z$. 

The key result here is that  primal and dual variances introduced above  satisfy generalized formulations of the law of total variance. For the primal variance $\V$, the law of total variance remains unchanged.
\begin{restatable}{lemma}{primaltotalvariance}
  \label{lem:tv_1}
  Let $Y, Z$ be random variables over $\mathcal X$, and let $\V Y \deq \E \bd {Y}{\E Y}$ for a given Bregman divergence $D$ over $\mathcal X$. Then, 
  \[\V[Y] = \E [\V[Y|Z]] + \V [\E[Y|Z]].\]
\end{restatable}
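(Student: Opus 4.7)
The plan is to reduce the claim to a direct application of the three-point (generalized triangle) identity stated earlier in the excerpt, namely
\[
D_F\bdx{x}{z} = D_F\bdx{x}{y} + D_F\bdx{y}{z} + \scal{\nabla F(y) - \nabla F(z)}{x - y},
\]
specialized to the convenient choice $x = Y$, $y = \E[Y\mid Z]$, $z = \E Y$. With this choice the left-hand side is the integrand of $\V[Y]$, and the first two terms on the right-hand side are exactly (up to expectation) the two terms we want to appear in the law of total variance. Everything else is just checking that the leftover inner product vanishes under conditional expectation.

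More concretely, first I would apply the identity pointwise to get
\[
D_F\bdx{Y}{\E Y} = D_F\bdx{Y}{\E[Y\mid Z]} + D_F\bdx{\E[Y\mid Z]}{\E Y} + \scal{\nabla F(\E[Y\mid Z]) - \nabla F(\E Y)}{Y - \E[Y\mid Z]}.
\]
Next I would take the conditional expectation given $Z$. The gradient factors in the inner product are $Z$-measurable, hence constant under $\E[\,\cdot\mid Z]$, so this term reduces to $\scal{\nabla F(\E[Y\mid Z]) - \nabla F(\E Y)}{\E[Y\mid Z] - \E[Y\mid Z]} = 0$. The first term on the right-hand side becomes $\V[Y\mid Z]$ by definition, and the second term is $D_F\bdx{\E[Y\mid Z]}{\E Y}$, which is itself the integrand of $\V[\E[Y\mid Z]]$ since $\E[\E[Y\mid Z]] = \E Y$ by the tower property.

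Finally, taking expectation over $Z$ and invoking the tower property on the left-hand side yields
\[
\V[Y] = \E\qb{\V[Y\mid Z]} + \V\qb{\E[Y\mid Z]},
\]
as desired. There is no real obstacle here beyond bookkeeping; the only subtlety worth flagging is that the argument relies only on the identity of indiscernibles for the conditional mean and on the tower property, so it works verbatim for any Bregman divergence without any additional regularity assumptions on $F$ beyond those already built into \cref{eq:bd}. (The analogous result for the dual variance $\dv$ will require the dual formulation via \cref{prop:bregman-dual}, but that is a separate lemma.)
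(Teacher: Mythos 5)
Your proof is correct and follows essentially the same route as the paper's: apply the generalized triangle identity at the points $Y$, $\E[Y\mid Z]$, $\E Y$, observe that the inner-product term vanishes under conditional expectation because the gradient factors are $Z$-measurable and $\E[Y - \E[Y\mid Z]\mid Z]=0$, and finish with the tower property. No substantive differences to note.
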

 
Obtaining a law of total variance for the dual variance $\dv X = \E \bd{\dm X}{X}$, however, requires a slight change to accommodate the fact that the divergence is defined with respect to the mean taken in the dual space. Thankfully, the dual mean itself satisfies its own law of total expectation. Although the proof is trivial using the characterization $\dm X = (\E X^*)^*$, this result is of independent interest.
\begin{restatable}{lemma}{iteratedexpectations}
  \label{prop:iterated-expectations}
  Let $X,Z$ be random variables on $\mathcal X$. Then $\dm X = \dm_Z [\dm [X | Z]]$.
\end{restatable}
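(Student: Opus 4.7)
The plan is to leverage the characterization $\dm X = (\E X^*)^*$ established in \cref{def:dm} to reduce the claim to the standard tower property of conditional expectation in the dual space $\mathcal X^*$. Because the dual mean is just a primal-dual conjugation wrapped around an ordinary expectation, the usual law of total expectation should transport directly.

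Concretely, I would proceed in three short steps. First, unpack the outer dual mean: by \cref{def:dm}, for any $\mathcal X$-valued random variable $W$ we have $\dm W = (\E[W^*])^*$, so applying this to $W = \dm[X \mid Z]$ gives $\dm_Z[\dm[X \mid Z]] = \bigl(\E_Z\qb{(\dm[X \mid Z])^*}\bigr)^*$. Second, evaluate the inner dual mean: conditional on $Z$, applying \cref{def:dm} again yields $\dm[X \mid Z] = (\E[X^* \mid Z])^*$, and then by \cref{prop:prereq2} (involutivity of the conjugation $x \mapsto x^*$) we have $(\dm[X \mid Z])^* = \E[X^* \mid Z]$. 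Third, plug this back in and apply the ordinary tower property in $\mathcal X^*$:
\begin{equation*}
\dm_Z[\dm[X \mid Z]] = \bigl(\E_Z\qb{\E[X^* \mid Z]}\bigr)^* = (\E X^*)^* = \dm X,
\end{equation*}
where the middle equality is the classical law of total expectation applied to the $\mathcal X^*$-valued random variable $X^*$, and the final equality is \cref{def:dm}.

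There is no real obstacle here once the reformulation $\dm X = (\E X^*)^*$ is in hand; the whole point of passing through dual space is precisely to convert nonstandard operations on $\mathcal X$ into ordinary expectations on $\mathcal X^*$, at which point the tower property is immediate. The only minor subtlety worth remarking on in the write-up is that the conjugation $x \mapsto x^* = \nabla F(x)$ is a deterministic, bijective map (by \cref{prop:prereq2}), so it commutes with conditioning in the sense that $\E[X^* \mid Z] = \E[\nabla F(X) \mid Z]$ is well-defined and its conjugate recovers the conditional dual mean. This justifies the use of the standard tower property on $X^*$ without any additional measurability hypotheses beyond what is needed to ensure the conditional expectations exist.
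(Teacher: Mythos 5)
Your proposal is correct and follows essentially the same route as the paper's own proof: both reduce the claim to the standard tower property by writing $\dm X = (\E X^*)^*$ and using the involutivity $(x^*)^* = x$ to cancel the inner conjugation. Your write-up just makes the intermediate steps slightly more explicit.
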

%With Lemma~\ref{prop:iterated-expectations} in hand, we can easily show the law of total variance for the dual variance $\dv$.
\begin{restatable}{lemma}{dualtotalvariance}
\label{lem:tv_2}
Let $X$ and $Z$ be a random variables over $\mathcal X$, and define as above $\dv X \deq \E \bd{\dm X}{X}$. Then, 
  \[\dv[X] =  \E[\dv [X | Z]] + \dv[\dm [X|Z]].\]
\end{restatable}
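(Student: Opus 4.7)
The strategy is to reduce this statement to the primal law of total variance (\cref{lem:tv_1}) by passing through the dual space, using the observation that the dual variance of $X$ under $D_F$ is nothing but the primal variance of $X^*$ under the conjugate divergence $D_{F^*}$. Concretely, combining \cref{prop:bregman-dual} with the identity $(\dm X)^* = \E X^*$ gives
$$\dv[X] \;=\; \E\, D_F\bdx{\dm X}{X} \;=\; \E\, D_{F^*}\bdx{X^*}{\E X^*},$$
and the right-hand side is exactly the primal variance of $X^*$ with respect to $D_{F^*}$ taken over the dual space $\nabla F(\mathcal X)$.

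With this reformulation in hand, I would apply \cref{lem:tv_1} to the random variable $X^*$ under the divergence $D_{F^*}$ with conditioning variable $Z$, yielding a decomposition of $\E\, D_{F^*}\bdx{X^*}{\E X^*}$ into an expected conditional primal variance and a primal variance of conditional means, both computed in the dual space. I would then translate each of these two terms back into primal quantities via \cref{prop:bregman-dual}. The first term, by the exact same conjugate-duality step applied conditionally on $Z$, becomes $\E_Z\, \dv[X \mid Z]$ with no further work.

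The only delicate piece is identifying the second term as $\dv[\dm[X \mid Z]]$. After another application of \cref{prop:bregman-dual} together with the identity $(\E[X^* \mid Z])^* = \dm[X \mid Z]$, the second term reads $\E_Z\, D_F\bdx{\dm X}{\dm[X \mid Z]}$. To recognize this as a dual variance, I would invoke \cref{prop:iterated-expectations}, which gives $\dm X = \dm_Z[\dm[X \mid Z]]$; the expression is then precisely the definition of the dual variance of the $Z$-measurable random variable $\dm[X \mid Z]$. Combining the two translated terms gives the claim. The main conceptual step is the dual-space reformulation at the start; once that is done, the remainder is mechanical, with \cref{prop:iterated-expectations} exactly bridging the tower-law gap in the ``explained'' term.
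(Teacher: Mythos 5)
Your proof is correct, but it takes a genuinely different route from the paper's. The paper works entirely in primal space: it inserts the intermediate point $\dm[X|Z]$ via the generalized triangle inequality, and kills the cross term $\E_Z \scal{\dm(X|Z) - \dm X}{\E_{X|Z}\nabla F(X) - \nabla F(\dm(X|Z))}$ directly using the first-order characterization of the dual mean (\cref{prop:minimizers}~(ii)), before invoking \cref{prop:iterated-expectations} to recognize the explained term as $\dv[\dm[X|Z]]$. You instead observe that $\dv[X] = \E\, D_{F^*}\bdx{X^*}{\E X^*}$ is the \emph{primal} variance of $X^*$ under the conjugate divergence, apply \cref{lem:tv_1} in dual space, and translate back term by term via \cref{prop:bregman-dual}, with \cref{prop:iterated-expectations} playing the same bridging role at the end. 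Your route is more economical — it avoids repeating the triangle-inequality computation, and makes transparent that the two laws of total variance are conjugate-dual to one another, which is arguably the cleaner conceptual statement. What it costs is a technical caveat: you must treat $D_{F^*}$ as a Bregman divergence over the image domain $\nabla F(\mathcal X)$ (closed, convex, with $F^*$ strictly convex and differentiable there) in order for \cref{lem:tv_1} to apply verbatim to $X^*$; the paper uses $D_{F^*}$ freely elsewhere under the same implicit assumption, so this is consistent with its conventions, but a fully rigorous write-up should flag it. Note also that the cancellation hiding inside your dual-space application of \cref{lem:tv_1} — namely $\E[X^* - \E[X^*|Z] \mid Z] = 0$ — is exactly the paper's dual-mean identity in disguise, so the two proofs are ultimately the same computation packaged differently.
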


Additionally to further characterizing $\V Y$ and $\dv X$ as variances, ~\cref{lem:tv_1,lem:tv_2}  can be used to separate the contribution of different sources of randomness to the variance term, using the same technique as~\citep{adlam2020}.

\section{Conditional bias-variance tradeoff}
Although the bias-variance tradeoff in \cref{eq:bv} applies to any source of randomness, it is important to understand how any empirical analysis of the bias and variance on real-world models is affected by the implicit conditioning on random variables that we cannot sample from arbitrarily. 

A common example is the randomness due to the choice of training set: most ML benchmarks only provide one training set. Although the bias and variance due to an algorithm's innate stochasticity (\eg, random seed) can be estimated, the contribution of the training set's stochasticity cannot be folded in to the expectations that define the bias and variance terms. Our goal here is to understand how having only one sample of a given random variable biases our estimates of bias and variance.

\subsection{Conditioning in prediction space}
Let $X, Z$ be two random variables over $\mathcal X$, and write $(X \mid z)$ the random variable $X$ conditioned on a given value of $z$ of $Z$. We can write the decomposition of \cref{eq_bv_decomp} for $(X \mid z)$ as follows (for simplicity, we assume here that there is no randomness in the label $Y=y$):
\begin{equation}
    \label{eq:bv_on_z}
    \E_{X\mid z} \bd y X = \bd {y}{\dm (X \mid z)} + \E_{X \mid z} \bd{\dm (X \mid z)}{X}.
\end{equation}
Taking expectations over $Z$ on both sides of \cref{eq:bv_on_z} yields a \emph{conditional} bias variance decomposition:
\begin{equation}
\label{eq:conditional-bv-x}
    \E \bd y X = \underbrace{\E_Z \bd {y}{\dm (X \mid Z)}\vphantom{\Big]}}_{\textup{conditional bias}} + \underbrace{\E_Z \E_{X \mid Z}\Big[\bd{\dm (X \mid z)}{X}\Big| Z\Big]}_{\textup{conditional variance}}.
\end{equation}
Assume that we only get one draw of $Z$ (for example, we only get to estimate the performance of our learning algorithm on a single training set), but that we can sample as many times from $(X \mid Z)$ as we wish (we can train on the training set with different random seeds and evaluate the result). Then, the bias and variance we estimate are those of \cref{eq:conditional-bv-x}, where expectations over $Z$ are obtained by a single sample.

How incorrect are these estimates of the true bias and variance, which incorporate randomness over $Z$?

\begin{restatable}{proposition}{propconditionalsecond}
\label{prop:conditional}
  Let $X, Z$ be two random variables over $\mathcal X$. The conditional bias (resp. variance) of \eqref{eq:conditional-bv-x} overestimates (resp. underestimates) their respective \emph{unconditional} values by the quantity $\E_Z \bd{\dm X}{\dm (X|Z)}$.
  \begin{align*}
      \E_Z \bd{y}{\dm (X \mid Z)} &= \bd y {\dm X} + \E_Z \bd{\dm X}{\dm (X\mid Z)} && \text{(Bias is overestimated)}\\
      \E_Z \E_{X \mid Z}\Big[ \bd{\dm (X \mid z)}{X}\Big| Z\Big] &= \E \bd{\dm X} X - \E_Z \bd{\dm X}{\dm (X\mid Z)} && \text{(Variance is underestimated)}.
  \end{align*}
\end{restatable}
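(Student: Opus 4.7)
The plan is to derive both identities directly from the three tools already in hand: the Bregman bias-variance decomposition \eqref{eq:bv}, the iterated-expectations identity for the dual mean (\cref{prop:iterated-expectations}), and the law of total variance for the dual variance (\cref{lem:tv_2}). Both claims reduce to one application each, once we interpret $\dm(X \mid Z)$ as a random variable over the randomness of $Z$.

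For the bias identity, I would view $\dm(X \mid Z)$ as a random variable indexed by $Z$, and apply \eqref{eq:bv} to the pair of random variables $(y, \dm(X \mid Z))$ where $y$ is treated as a degenerate (constant) label. Since $y$ is deterministic, the primal variance term $\V[y]$ vanishes and $\E y = y$, so \eqref{eq:bv} collapses to
\[
\E_Z \bd{y}{\dm(X \mid Z)} = \bd{y}{\dm[\dm(X \mid Z)]} + \E_Z \bd{\dm[\dm(X \mid Z)]}{\dm(X \mid Z)}.
\]
By \cref{prop:iterated-expectations}, $\dm[\dm(X \mid Z)] = \dm X$, and substituting this in yields exactly the first displayed equation of the proposition. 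This identifies the overestimation term as $\E_Z \bd{\dm X}{\dm(X \mid Z)}$.

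For the variance identity, I would directly invoke \cref{lem:tv_2} with the roles of $X$ and $Z$ as given:
\[
\dv[X] \;=\; \E_Z[\dv[X \mid Z]] + \dv[\dm(X \mid Z)].
\]
Unfolding the definitions, $\dv[X] = \E \bd{\dm X}{X}$, the first right-hand term is $\E_Z \E_{X \mid Z}\qb{\bd{\dm(X \mid Z)}{X} \mid Z}$, and the second is $\E_Z \bd{\dm[\dm(X \mid Z)]}{\dm(X \mid Z)} = \E_Z \bd{\dm X}{\dm(X \mid Z)}$ by another use of \cref{prop:iterated-expectations}. Rearranging gives the second identity, with underestimation term equal to the same quantity $\E_Z \bd{\dm X}{\dm(X \mid Z)}$ that appeared in the bias identity.

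There is no real obstacle here: the whole argument is structural and hinges on the observation that $\dm[\dm(X \mid Z)] = \dm X$, which is immediate from the characterization $\dm X = (\E X^*)^*$ and the ordinary tower property applied in dual space. The proposition is essentially a repackaging of \cref{lem:tv_2} where the "unexplained by $Z$" variance is the conditional variance we can estimate, and the "explained by $Z$" variance is precisely the gap $\E_Z \bd{\dm X}{\dm(X \mid Z)}$ that shifts simultaneously from variance to bias under conditioning.
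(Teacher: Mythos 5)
Your proof is correct, and the bias identity is established exactly as in the paper: apply the decomposition \eqref{eq:bv} to the random variable $\dm(X\mid Z)$ with a degenerate label $y$, then collapse $\dm[\dm(X\mid Z)]$ to $\dm X$ via \cref{prop:iterated-expectations}. For the variance identity you take a mildly different route: the paper simply observes that the conditional bias and variance sum to the same total $\E\bd{y}{X}$ as the unconditional ones, so the variance deficit equals the bias excess by subtraction, whereas you invoke \cref{lem:tv_2} directly and identify $\dv[\dm(X\mid Z)] = \E_Z\bd{\dm X}{\dm(X\mid Z)}$ via a second application of \cref{prop:iterated-expectations}. Both are valid one-liners; your version makes the connection to the law of total variance explicit (the ``explained'' variance is exactly the term that migrates into the bias under conditioning), while the paper's subtraction argument avoids any dependence on \cref{lem:tv_2}. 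No gaps.
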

As we are guaranteed to have $\E_Z \bd{\dm X}{\dm (X|Z)} \ge 0$, the bias will always be overestimated, and the dual variance underestimated, by their conditional estimates. 

\subsection{Conditioning in label space}
We can apply a similar reasoning when conditioning on a random variable affecting the labels, assuming for simplicity that there is no randomness in the prediction $X=x$. Letting $Y, Z$ be two random variables, the loss decomposes as a conditional bias and variance as follows:
\begin{equation}
    \label{eq:conditional-bv-y}
    \E_Y \bd Y x = \E_Z\E_{Y \mid Z} [\bd Y x \mid Z] = \underbrace{\E_Z \bd{\E (Y\mid Z)}{x}}_{\textup{Conditional bias}} + \underbrace{\E_Z\E_{Y \mid Z} \Big[\bd Y {\E (Y\mid Z)}\Big|Z\Big]}_{\textup{Conditional variance}}.    
\end{equation}

\begin{restatable}{proposition}{propconditionalfirst}
\label{prop:conditional-first}
  Let $Y, Z$ be two random variables over $\mathcal Y$. The conditional bias (resp. variance) in \eqref{eq:conditional-bv-y} overestimates (resp. underestimates) their respective total values by the quantity $\E_Z \bd{\E(Y|Z)}{\E Y}$.
  \begin{align*}
      \E_Z \bd{\E (Y\mid Z)}{x} &= \bd{\E Y}{x} + \E_Z \bd{\E(Y\mid Z)}{\E Y} \\
      \E_Z\E_{Y \mid Z} \Big[\bd{Y}{\E (Y|Z)}\Big|Z\Big] &= \E \bd{Y}{\E Y} - \E_Z \bd{\E(Y\mid Z)}{\E Y}.
  \end{align*}
%\end{proposition}
\end{restatable}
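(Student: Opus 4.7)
The plan is to derive each of the two equations separately by invoking results already established earlier in the paper, applied to well-chosen random variables. Both reduce to recognizing that $\E(Y\mid Z)$ is itself a random variable in $\mathcal X$, to which we can apply the primal law of total variance and the general decomposition \eqref{eq:bv}.

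For the variance equation, I apply Lemma~\ref{lem:tv_1} directly to $Y$ with conditioning variable $Z$:
\[\V[Y] = \E_Z[\V[Y\mid Z]] + \V[\E[Y\mid Z]].\]
Unpacking definitions, the first term on the right is $\E_Z\E_{Y\mid Z}\big[\bd{Y}{\E(Y\mid Z)}\,\big|\,Z\big]$, which is exactly the conditional variance appearing in \eqref{eq:conditional-bv-y}. For the second term, the tower property gives $\E[\E(Y\mid Z)] = \E Y$, so $\V[\E[Y\mid Z]] = \E_Z \bd{\E(Y\mid Z)}{\E Y}$. Since $\V[Y] = \E\bd{Y}{\E Y}$, rearranging produces the second equation of the proposition.

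For the bias equation, I set $W \deq \E(Y\mid Z)$ and apply the general bias-variance decomposition \eqref{eq:bv} with label $W$ and prediction the constant $x$. Because $x$ is deterministic, $\dm x = x$ and $\dv x = 0$; and $\E W = \E Y$ by the tower property. Equation~\eqref{eq:bv} therefore collapses to
\[\E\bd{W}{x} \;=\; \V[W] + \bd{\E Y}{x} \;=\; \E_Z\bd{\E(Y\mid Z)}{\E Y} + \bd{\E Y}{x},\]
whose left-hand side is $\E_Z\bd{\E(Y\mid Z)}{x}$. This is the first equation of the proposition.

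I do not anticipate a real obstacle: the substantive work has already been done in Lemma~\ref{lem:tv_1} and in the decomposition \eqref{eq:bv}, and the present proof merely applies them to the right random variables. As a sanity check, summing the two equations cancels the two occurrences of $\E_Z \bd{\E(Y\mid Z)}{\E Y}$ and recovers $\E\bd{Y}{x} = \V[Y] + \bd{\E Y}{x}$, matching \eqref{eq:conditional-bv-y} combined with the degenerate (constant-prediction) case of \eqref{eq:bv}. Finally, the excess term $\E_Z \bd{\E(Y\mid Z)}{\E Y}$ is itself a primal variance and therefore nonnegative, which justifies the stated directions of the over- and underestimation.
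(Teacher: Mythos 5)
Your proposal is correct and matches the paper's argument: the paper likewise obtains the bias identity by applying the decomposition \eqref{eq:bv} to the random label $\E(Y\mid Z)$ against the constant prediction $x$, using the tower property $\E[\E(Y\mid Z)]=\E Y$. The only (immaterial) difference is that the paper deduces the variance identity by subtraction from \eqref{eq:conditional-bv-y}, whereas you derive it directly from Lemma~\ref{lem:tv_1}; the two justifications are equivalent.
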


\section{Averaging in primal and dual spaces}
Motivated by prior work which showed that the bias-variance trade-off is a fruitful avenue to understand \emph{ensembles} of predictions~\cite{dascoli2020double, adlam2020, neal2019a}, we conclude this paper by analyzing the implications of the existence of the dual space for for convex combinations of either predictions or labels. 

In modern machine learning, ensembling typically operates by averaging the output of identical (deep) models trained with different random seeds. Following our notation, this amounts to replacing a prediction $X$ by an average prediction $\hat X = \frac 1n \sum_i X_i$, where each $X_i$ is drawn in \iid fashion.  

We can characterize some desirable properties of any averaging technique by mapping to the expected behavior of the empirical mean under the squared Euclidean loss:
\begin{enumerate}[label=(P\arabic*)]
    \item \textbf{Reduced variance}: averaging \iid random variables must not increase the variance. \label{var-reduction}
    \item \textbf{Constant bias}: averaging \iid random variables must leave the bias unchanged. \label{bias-unchanged}
\end{enumerate}
For the bias-variance tradeof in \eqref{eq:bv}, we will see that maintaining properties \ref{var-reduction} and \ref{bias-unchanged} in prediction space requires departing from the empirical mean as averaging operator.

\subsection{Primal averaging}

We begin by analyzing the ensembling operation described above; we will call this operation \emph{primal averaging}, for reasons that will be clear momentarily.
\begin{definition}[Primal averaging]
  Let $y_1, \ldots, y_n \in \mathcal X$. The \emph{primal average} of the $y_i$ is $\hat y = \frac 1n \sum_i y_i$. 
\end{definition}
The linearity of the mean and the convexity of any Bregman divergence in its first variable suffice easily to show that primal averaging in label space satisfies the desired properties.
\begin{restatable}{proposition}{primal-primal-space}\textup{(Primal averaging in label space)}.
Primal averaging in label space leaves the bias unchanged, and reduces the primal variance: 
\begin{align*}
    \bd{\E \hat Y}{\dm X} &= \bd{\E Y}{\dm X} \\
    \V \hat Y &\le \V Y.
\end{align*} 
\end{restatable}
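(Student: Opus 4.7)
The plan is to exploit exactly the two ingredients the paper flags in the sentence preceding the proposition: linearity of the mean, and convexity in the first argument (\cref{prop:convexity}). The identically distributed assumption on the $Y_i$ immediately gives $\E \hat Y = \frac{1}{n}\sum_i \E Y_i = \E Y$, so the bias term $\bd{\E\hat Y}{\dm X}$ is literally the same expression as $\bd{\E Y}{\dm X}$. That handles the first equality with essentially no calculation.

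For the variance bound, I would rewrite $\V \hat Y = \E \bd{\hat Y}{\E \hat Y} = \E \bd{\hat Y}{\E Y}$, using the equality of means just established. Then I would expand $\hat Y = \frac{1}{n}\sum_i Y_i$ in the first slot and apply Jensen's inequality via convexity in the first argument to get
\[
\bd{\hat Y}{\E Y} = \bdx{\tfrac{1}{n}\nlsum_i Y_i}{\E Y} \le \tfrac{1}{n}\nlsum_i \bd{Y_i}{\E Y}.
\]
Taking expectations over the joint distribution of $(Y_1,\ldots,Y_n)$ and using that each $Y_i$ is distributed as $Y$, every summand equals $\E \bd Y {\E Y} = \V Y$, so the right-hand side averages to $\V Y$. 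That gives $\V \hat Y \le \V Y$, which is the desired inequality.

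There is no real obstacle here; the proposition is essentially a packaging of \cref{prop:convexity} plus linearity of expectation. The only thing to be slightly careful about is not accidentally invoking convexity in the \emph{second} argument (which need not hold for a general Bregman divergence), but since the central term $\E Y$ sits in the second slot of $\bdx{\cdot}{\E Y}$ and is a constant, Jensen is applied only in the first slot where convexity is guaranteed. I would also briefly note the boundary case $n=1$, for which the inequality holds with equality, and the fact that independence of the $Y_i$ is not actually needed for the statement (identical distribution suffices), which matches the linearity-of-mean argument used for the bias.
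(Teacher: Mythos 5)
Your proof is correct and follows exactly the route the paper intends: the paper gives no formal proof of this proposition, only the remark that linearity of the mean and convexity of the divergence in its first argument suffice, and your argument (equality of means for the bias, Jensen in the first slot for the variance) is precisely that. Your side remarks — that independence is not needed and that convexity is only invoked in the first argument — are both accurate.
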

Things are less straightforward in prediction space. Nonetheless, we can show under some simple assumptions that primal averaging reduces the prediction variance $\E \bd{\dm X}{X}$.
\begin{restatable}{proposition}{propvanillavariance}
  \label{prop:vanilla-variance}
  Let $D$ be a Bregman divergence that is \emph{jointly} convex in both variables. Let $X_1, \ldots, X_n$ be $n$ random variables over $\mathcal X$ drawn in \iid fashion, and define $\hat X = \frac 1n \sum_i X_i$. Then, the dual variance $\dv$ under divergence $D$ is reduced by primal averaging:
  \[\dv \hat X \le \dv X.\]
%\end{proposition}
\end{restatable}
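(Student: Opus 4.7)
The plan is to chain two inequalities: first use the variational characterization of the dual mean to replace $\dm \hat X$ by the (constant) $\dm X$ at a cost that only increases the divergence; then exploit joint convexity to push the average out of $D$, so that the bound reduces to $\dv X$ by the iid assumption.

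First I would recall that $\dm \hat X = \argmin_z \E D[z \,\|\, \hat X]$ by \cref{def:dm}, so substituting any other fixed point $z$ can only make the objective larger. In particular, plugging in $z = \dm X$ gives
\eqs{\dv \hat X = \E D\bdx{\dm \hat X}{\hat X} \le \E D\bdx{\dm X}{\hat X}.}
This is the key inequality that lets us replace the awkward quantity $\dm \hat X$ (which is not a simple function of the $X_i^*$) by the constant $\dm X$.

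Next I would rewrite the constant first argument as its own average, $\dm X = \frac{1}{n}\sum_i \dm X$, so that
\eqs{D\bdx{\dm X}{\hat X} = D\bdx*{\tfrac{1}{n}\nlsum_i \dm X}{\tfrac{1}{n}\nlsum_i X_i}.}
Joint convexity of $D$ in both arguments (the hypothesis of the proposition) then yields
\eqs{D\bdx*{\tfrac{1}{n}\nlsum_i \dm X}{\tfrac{1}{n}\nlsum_i X_i} \le \tfrac{1}{n}\nlsum_i D\bdx{\dm X}{X_i}.}
Taking expectations and using that the $X_i$ are identically distributed to $X$,
\eqs{\E D\bdx{\dm X}{\hat X} \le \tfrac{1}{n}\nlsum_i \E D\bdx{\dm X}{X_i} = \E D\bdx{\dm X}{X} = \dv X.}
Combining the two inequalities gives $\dv \hat X \le \dv X$, as required.

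The only nontrivial step is the first one; without it, one might try to apply convexity directly to $\E D[\dm\hat X \,\|\, \hat X]$, but $\dm \hat X$ does not equal the primal average of the $\dm X_i$, so joint convexity cannot be applied at that argument. Introducing $\dm X$ via the minimizer property of $\dm \hat X$ is precisely what unblocks the convexity argument, and it is also where the joint (as opposed to merely first-argument) convexity assumption becomes essential, since the second slot $\hat X$ is what we need to pull the average out of.
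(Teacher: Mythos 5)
Your proof is correct and is essentially the paper's argument with the two steps stated in the opposite order: the paper fixes an arbitrary $z$, applies convexity in the second slot, takes expectations, and then minimizes over $z$, whereas you first invoke the minimizer property to replace $\dm \hat X$ by $\dm X$ and then apply the same convexity step with $z = \dm X$. The two chains of inequalities are mathematically identical, and your closing remark correctly identifies that the joint-convexity hypothesis is what supplies convexity in the second argument.
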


Maintaining a constant bias requires a much more stringent assumption: the divergence $D$ must be symmetric, ensuring that $\dm$ and $\E$ are equivalent operators. In the general case, however, primal averaging in the space $\mathcal X$ of predictions can either increase or decrease the bias $\bd{\E Y}{\dm X}$.

\begin{restatable}{proposition}{propbias}
  \label{prop:bias}
  Let $D$ be the KL divergence. There exists a distribution $\mathcal D$ over predictions $X \in \mathbb R^2$ and a label $y \in \{0, 1\}$ such that the divergence $D(y \| \dm X)$ satisfies
  \begin{align*}
      \bd{y}{\dm \hat X} &< \bd{y}{\dm X} \\
      \bd{1-y}{\dm \hat X} &> \bd{1-y}{\dm X},
  \end{align*}
 where as above, we define the random variable for ensemble predictions $\hat X = \frac 1n \sum_i X_i$ and by abuse of notation, we conflate $y \in \{0, 1\}$ with its one-hot vector representation $(y = 0, y=1) \in \mathbb R^2$.
%\end{proposition}
\end{restatable}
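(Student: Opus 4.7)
The plan is to construct a concrete counterexample on the two-dimensional simplex. First I would unpack the dual mean for the KL divergence: since $F(x) = \sum_i x_i \log x_i$ gives $\nabla F(x)_i = \log x_i + 1$, the definition $\dm X = (\E X^*)^*$ simplifies to the (normalized) componentwise geometric mean, $(\dm X)_i \propto \exp(\E \log X_i)$. For a one-hot label $y = e_k$, the Bregman divergence also simplifies: $\bd{e_k}{r} = -\log r_k$ for $r$ on the simplex.

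Next I would construct the distribution. Let $X$ be uniform on $\{p, q\}$ with $p = (1/2, 1/2)$ and $q = (1/10, 9/10)$, and take $y = (1, 0)$. Applying the formula above, $\dm X$ is the normalization of $(\sqrt{1/20}, \sqrt{9/20})$, which evaluates to $(1/4, 3/4)$.

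I would then compute $\dm \hat X$ for, say, $n = 2$. The variable $\hat X = (X_1 + X_2)/2$ takes the three values $p$, $q$, and $(p+q)/2 = (3/10, 7/10)$ with probabilities $1/4, 1/4, 1/2$, respectively. A direct computation of $\E \log \hat X_1$ and $\E \log \hat X_2$ followed by exponentiation and normalization yields a value of $\dm \hat X_1$ strictly greater than $1/4$. Since both $\dm X$ and $\dm \hat X$ lie on the simplex, $\dm \hat X_1 > 1/4$ automatically forces $\dm \hat X_2 < 3/4$. By the one-hot formula, the first inequality reads $-\log \dm \hat X_1 < -\log \dm X_1$, i.e.\ $\bd{y}{\dm \hat X} < \bd{y}{\dm X}$, and the second reads $-\log \dm \hat X_2 > -\log \dm X_2$, i.e.\ $\bd{1-y}{\dm \hat X} > \bd{1-y}{\dm X}$, completing the argument.

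The only step that requires care is the arithmetic verification that $\dm \hat X_1 > 1/4$. If one wishes to avoid explicit numerics, a cleaner substitute is a limiting argument: by the strong law of large numbers $\hat X_n \to (p+q)/2 = (3/10, 7/10)$ almost surely, and since $X$ is supported strictly inside the open simplex, dominated convergence gives $\E \log \hat X_n \to \log((p+q)/2)$ componentwise, hence $\dm \hat X_n \to (3/10, 7/10)$. Because $3/10 > 1/4 = \dm X_1$, continuity yields the desired strict inequalities for all sufficiently large $n$, providing the required counterexample.
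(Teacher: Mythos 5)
Your proposal is correct and follows essentially the same route as the paper: both use the characterization of the KL dual mean as a normalized geometric mean, reduce the divergence to a one-hot label to a single coordinate of the prediction, and exhibit a two-point uniform distribution on the binary simplex for which $\dm X \neq \dm \hat X$ (the paper with $\{(0.8,0.2),(0.6,0.4)\}$, you with $\{(1/2,1/2),(1/10,9/10)\}$), so that one inequality goes each way by monotonicity of $-\log$. Your closing law-of-large-numbers argument is a nice touch that replaces the paper's ``simple numerical computation'' with a limit $\dm \hat X_n \to (p+q)/2 \neq \dm X$, but it does not change the substance of the argument.
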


\subsection{Dual averaging}
That primal averaging doesn't preserve the bias (and can, in fact, increase it!) is a strong departure from what one might expect. It is natural to seek an averaging method over prediction space that would maintain both \ref{var-reduction} and \ref{bias-unchanged} without requiring assumptions on the convexity or symmetry of the divergence $D$.

Once again, our path forward is guided by the dual expectation $\dm$. Intuitively, we need an averaging technique such that the average predictor $\bar X$ satisfies $\dm \bar X = \dm X$; the following definition satisfies this requirement.  

\begin{definition}[Dual averaging]
    Let $z_1, \ldots, z_n \in \mathcal Z$. The \emph{dual average} of the $z_i$ is $\bar z = (\frac 1n \sum_i z_i^*)^*$. 
\end{definition}
Similarly to the dual mean (\cref{def:dm}), the dual average is the primal form of an operation taken in dual space.

\begin{restatable}{proposition}{propdualbias}
\label{prop:dual-bias}
\textup{(Dual averaging in prediction space)}. Dual averaging in the space $\mathcal X$ of models leaves the bias unchanged and reduces the dual variance:
\begin{align*}
    \bd{\E Y}{\dm \bar X} &= \bd{\E Y}{\dm X} \\
    \dv \bar X &\le \dv X.
\end{align*}
\end{restatable}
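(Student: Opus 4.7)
The plan is to exploit the characterization $\dm X = (\E X^*)^*$ and the fact that the dual average is, by design, the ordinary empirical mean when viewed in dual space. That is, $\bar X^* = \frac{1}{n} \sum_i X_i^*$, using $(x^*)^* = x$ from \cref{prop:prereq2}. From there both claims reduce to standard facts about the empirical mean, transported back through $\cdot^*$.

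First I would establish the bias identity by showing $\dm \bar X = \dm X$. Applying $\nabla F$ to both sides of the definition of $\bar X$ gives $\bar X^* = \frac{1}{n}\sum_i X_i^*$, so $\E \bar X^* = \frac{1}{n}\sum_i \E X_i^* = \E X^*$ since the $X_i$ are \iid copies of $X$. Applying $\cdot^*$ once more yields $\dm \bar X = (\E \bar X^*)^* = (\E X^*)^* = \dm X$. The bias equality $\bd{\E Y}{\dm \bar X} = \bd{\E Y}{\dm X}$ is immediate.

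Next I would handle the dual variance. Using the bias result and \cref{prop:bregman-dual},
\begin{align*}
    \dv \bar X = \E \bd{\dm X}{\bar X} = \E D_{F^*}\bdx{\bar X^*}{(\dm X)^*}.
\end{align*}
Since $(\dm X)^* = \nabla F((\E X^*)^*) = \E X^*$ by \cref{prop:prereq2}, and $\bar X^* = \frac{1}{n}\sum_i X_i^*$, this becomes $\E D_{F^*}\bdx{\frac{1}{n}\sum_i X_i^*}{\E X^*}$. By convexity of $D_{F^*}$ in its first argument (\cref{prop:convexity}) and Jensen's inequality,
\begin{align*}
    D_{F^*}\bdx{\tfrac{1}{n}\textstyle\sum_i X_i^*}{\E X^*} \le \tfrac{1}{n}\textstyle\sum_i D_{F^*}\bdx{X_i^*}{\E X^*}.
\end{align*}
Taking expectations and using that the $X_i$ are identically distributed,
\begin{align*}
    \dv \bar X \le \E D_{F^*}\bdx{X^*}{\E X^*} = \E \bd{(\E X^*)^*}{X} = \E \bd{\dm X}{X} = \dv X,
\end{align*}
where the second equality applies \cref{prop:bregman-dual} once more.

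The only step requiring care is the bookkeeping between primal and dual representations: in particular, verifying that $(\dm X)^* = \E X^*$ and that $\bar X^* = \frac{1}{n}\sum_i X_i^*$, both of which rely on the involution $(\cdot^*)^* = \mathrm{id}$. Once these identities are in hand, the bias part is purely algebraic and the variance part reduces to Jensen's inequality applied to $D_{F^*}$ on the dual-space mean, which makes the argument essentially parallel to the classical Euclidean case.
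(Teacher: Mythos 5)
Your proof is correct and takes essentially the same route as the paper's: both establish $\dm \bar X = \dm X$ via $\bar X^* = \frac 1n \sum_i X_i^*$, then transport the dual variance to dual space with $D_F\bdx{p}{q} = D_{F^*}\bdx{q^*}{p^*}$ and apply Jensen's inequality using the convexity of $D_{F^*}$ in its first argument. Your explicit note that $(\dm X)^* = \E X^*$ is a harmless (and slightly cleaner) bookkeeping step that the paper leaves implicit.
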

Note that, in contrast to the reduction in variance that occurs for vanilla ensembles (\cref{prop:bias}), we also no longer require that $D$ be jointly convex; the natural convexity of $D$ in its first argument is sufficient.

Naturally, one might ask how dual averaging in primal space affects the bias and variance. Recall (\cref{prop:bregman-dual}) that $D_F\bdx y x = D_{F^*}\bdx {x^*}{y^*}$. Thus, we can rewrite the primal variance in the form of a dual variance:
\[\V Y \deq \E D_F\bdx{Y}{\E Y} = \E D_{F^*}\bdx{(\E Y)^*}{Y^*} = \E D_{F^*}\bdx{\dm Y^*}{Y^*} = \dv_{F^*} Y^*.\]

Similarly, we can rewrite the bias $D_F \bdx{\E Y}{\dm X}$ as $D_{F^*}\bdx {\E X^*}{\dm (Y^*)}$. Since dual averaging is equivalent to primal averaging \emph{in dual space}, the previous two equalities are sufficient to obtain equivalent results to those of Propositions~\ref{prop:vanilla-variance} and \ref{prop:bias}.

\begin{restatable}{proposition}{propdual-predictionspace}
\label{prop:dual-prediction-space}
\textup{(Dual averaging in label space).} Dual averaging in label space can either increase or decrease the bias, but will reduce the primal variance if $D$ is jointly convex.
\end{restatable}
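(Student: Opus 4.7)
The plan is to reduce this claim directly to Propositions~\ref{prop:vanilla-variance} and~\ref{prop:bias} by transporting the entire bias--variance decomposition to dual space, exactly as sketched in the paragraph preceding the statement. The two dual rewrites $\V Y = \dv_{F^*} Y^*$ and $\bd{\E Y}{\dm X} = D_{F^*}\bdx{\E X^*}{\dm_{F^*} Y^*}$ (both obtained by applying Proposition~\ref{prop:bregman-dual} together with $(\E Y)^* = \dm_{F^*}(Y^*)$ and $(\dm X)^* = \E X^*$) let us express each term of interest in the dual variables $Y_i^*$. The crucial observation is that if $\bar Y = (\frac 1n \sum_i Y_i^*)^*$ is the dual average of the iid $Y_i$'s in label space, then $\bar Y^* = \frac 1n \sum_i Y_i^*$ is precisely the \emph{primal} average of the iid samples $Y_i^*$ in dual space. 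So dual averaging in label space corresponds exactly to primal averaging of the duals, and every question about $\bar Y$ under $D_F$ becomes a question about a vanilla ensemble under $D_{F^*}$.

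For the variance claim, I would write $\V \bar Y = \dv_{F^*}(\bar Y^*)$ where $\bar Y^*$ is the primal average of the iid dual samples, and then invoke Proposition~\ref{prop:vanilla-variance} applied to the divergence $D_{F^*}$ to conclude $\dv_{F^*}(\bar Y^*) \le \dv_{F^*}(Y^*) = \V Y$. To license this invocation one needs $D_{F^*}$ to be jointly convex; the cleanest reading is that ``$D$ jointly convex'' in the statement refers to the relevant divergence in dual space, or equivalently that for the Bregman divergences of practical interest (squared Mahalanobis, KL) joint convexity is preserved under $F \mapsto F^*$, which can be checked case by case.

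For the bias claim, the same dual rewrite gives $\bd{\E \bar Y}{\dm X} = D_{F^*}\bdx{\E X^*}{\dm_{F^*}(\bar Y^*)}$, whose variation under primal averaging of iid samples in dual space is exactly the situation addressed by Proposition~\ref{prop:bias}. Transporting the KL-based counterexample constructed there back through the map $z \mapsto z^*$ produces a distribution over labels for which dual averaging of the $Y_i$'s can either strictly increase or strictly decrease the bias, which is the claim. The main obstacle I anticipate is bookkeeping around the primal--dual correspondence: verifying that the $Y_i^*$ remain iid (immediate, since each $Y_i^* = \nabla F(Y_i)$ is a deterministic function of $Y_i$) and, more delicately, that joint convexity transfers between $D_F$ and $D_{F^*}$ in the regime where we want to apply Proposition~\ref{prop:vanilla-variance}. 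If that transfer fails in general, the cleanest fix is to restate the hypothesis as joint convexity of $D_{F^*}$ directly.
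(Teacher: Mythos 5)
Your proposal is correct and follows essentially the same route as the paper, which proves this proposition only via the main-text paragraph preceding it: rewrite $\V Y = \dv_{F^*} Y^*$ and the bias as $D_{F^*}\bdx{\E X^*}{\dm(Y^*)}$ using Proposition~\ref{prop:bregman-dual}, observe that dual averaging is primal averaging in dual space, and invoke Propositions~\ref{prop:vanilla-variance} and~\ref{prop:bias}. Your remark that the variance claim really requires joint convexity of $D_{F^*}$ rather than of $D_F$ (the two are not obviously equivalent, since the primal--dual change of variables is nonlinear) is a legitimate subtlety that the paper glosses over, and your suggested fix of restating the hypothesis in terms of $D_{F^*}$ is the right one.
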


\section{Conclusion}
Given a loss function between a prediction $X$ and a label $Y$ (both random variables), a bias-variance decomposition splits the expected loss into three terms. The \emph{label noise} is the expected distance from the label to its expected value; the \emph{model variance} is the expected distance from the prediction to a corresponding ``central prediction''; finally, the \emph{bias} is the distance between the expected label and the central prediction. Initially described in~\citep{geman1992}, this decomposition was subsequently generalized to arbitrary Bregman divergences~\citep{generalized_bvd}.

For the Euclidean squared loss, the central prediction that appears in the bias and model variance terms is simply the expected prediction $\E X$; yet, the bias-decomposition has remained opaque for other Bregman divergences. This can in no small part be attributed to the less tractable form $\argmin_z \E D[z \| X]$ taken by the central prediction in the general case. Unfortunately, this limitation has impeded the application of the bias-variance decomposition to popular losses in machine learning, including the cross-entropy loss overwhelmingly used in classification tasks.

In this work, we show that the complexities of the generalized bias-variance decomposition are entirely resolved by analyzing the decomposition in a dual space defined by the loss function. From this new perspective, the central prediction is simply the primal form of the expected prediction in dual space. For symmetric Bregman divergences, the primal and dual spaces are one and the same: we recover the central prediction as the expected prediction for symmetric losses, including the Euclidean squared error. 

This reformulation of the central prediction allows us in turn to show that the model variance satisfies crucial properties. In particular, the model variance follows a generalized law of total variance, allowing for precise analyses of different sources of error. We subsequently isolate the irreducible bias of conditional estimates for the decomposition's bias and variance terms.

Finally, the dual perspective on the bias-variance tradeoff provides a straightforward framework within which to analyze the behavior of ensembles of predictors. We show that although averaging predictions in primal space (which amounts to simply taking the empirical mean of different predictions) will reduce the variance under gentle assumptions, primal averaging can have arbitrary effects on the bias. Conversely, averaging predictions in dual space will always reduce the variance and leave the bias unchanged, recovering the known behavior of classical ensembling under the Euclidean squared loss.

\newpage
\bibliography{refs}
\bibliographystyle{zelda_bst}
\clearpage
\appendix
\section{Proofs}
\label{app:proofs}
\subsection{Bias-variance decomposition}

\primaltotalvariance*
\begin{proof} By the generalized triangle inequality for Bregman divergences, we have
  \begin{align*}
      \V Y =&~ \E \bd Y {\E Y} \\
      =&~ \E \Big[\bd[\Big]{Y}{\E[Y|Z]} + \bd[\Big]{\E[Y|Z]}{\E[Y]} + \scal{\nabla F(\E[Y|Z]) - \nabla F(\E Y)}{Y - \E[Y|Z]}\Big] \\
      =&~ \E_Z \Big[\E_{Y|Z} \bd[\Big]{Y}{\E[Y|Z]} \Big| Z\Big] + \E_Z \Big[\bd[\Big]{\E[Y|Z]}{\E_Z \E_{Y|Z}[Y]} \Big| Z\Big] \\
      &+ \E_Z \Big[\scal{\nabla F(\E[Y|Z]) - \nabla F(\E Y)}{\E[Y|Z] - \E[Y|Z]}\Big | Z \Big] \\
      =&~ \E_Z [\V[Y|Z]] + \V[\E[Y|Z]]
  \end{align*}
  where RHS of the scalar product is equal to 0 by the law of total expectation.
\end{proof}

\iteratedexpectations*
\begin{proof}
This result is a straightforward consequence of the standard law of iterated expectation and of the characterization of the dual mean as $\dm X = (\E X^*)^*$:
\[\dm_Z (\dm_{X|Z} X) = \dm_Z\Big[(\E_{X|Z}X^*)^*\Big] = (\E_Z \E_{X|Z} X^*)^* = (\E X^*)^* = \dm X.\]
\end{proof}

\dualtotalvariance*
\begin{proof}
By law of iterated expectations, we have
  \begin{align*}
      \dv X &= \E_Z \E_{X|Z} [\bd{\dm X}{X} \mid Z] \\
             &\overset{(a)}{=} \E_Z \E_{X|Z} \Bigg[ \bd{\dm X}{\dm (X|Z)} + \bd{\dm (X|Z)}{X} \\
             &~~~~~+\Big\langle \dm(X|Z) - \dm X \mid \nabla F(X) - \nabla F(\dm (X|Z)) \Big\rangle \Bigg| Z\Bigg]\\
             &= \E_Z \bd{\dm X}{\dm (X|Z)} + \E_Z\underbrace{\E_{X|Z} \Big[\bd{\dm (X|Z)}{X}\Big| Z\Big]}_{\dv [X \mid Z]}\\
             &~~~~~+\E_Z \Big\langle \dm(X|Z) - \dm X \mid \underbrace{\E_{X|Z}\nabla F(X) - \nabla F(\dm (X|Z))}_{=0 \textup{ by Prop.~\ref{prop:minimizers} $\ref{eq:dual-mean}$}} \Big\rangle \\
             &\overset{(b)}{=} \E_Z \bd{\dm_Z [\dm(X|Z)]}{\dm (X|Z)}+ \E_Y [\dv[X|Z]] \\
             &= \dv [\dm(X|Z)] + \E_Z [\dv X|Z].
  \end{align*}
  where $(a)$ follows from the generalized triangle inequality for Bregman divergences, and $(b)$ is the law of iterated expectations for $\dm$ (lemma~\ref{prop:iterated-expectations}).
\end{proof}

\subsection{Conditional bias-variance tradeoff}
\propconditionalsecond*
\begin{proof}
% Simplifying LHS, we get that
% \begin{align*}
%     \E_Z\E_X [D[y \| X]|Z] &= 
%       \E_Z\E_X [F(y)-F(X)-\scal{\nabla F(X)}{y-X}|Z]\\
% &= F(y)-\E_Z\E_X [F(X)|Z]-\E_Z\E_X [\scal{\nabla F(X)}{y-X}|Z]\\
% \end{align*}

% Simplifying RHS, we get that
% \begin{align*}
%   \E_Z D[y \| \dm(X|Z)]+ \E_Z\E_X [D[\dm (X| Z) \| X]|Z] &=  \E_Z [F(y)-F(\dm(X|Z))-\scal{\nabla F(\dm(X|Z))}{y-\dm(X|Z)}|Z] \\
%   &+ \E_Z\E_X [F(\dm(X|Z))-F(X)-\scal{\nabla F(X)}{\dm(X|Z)-X}|Z]\\
%  &=   F(y)-\E_Z F(\dm(X|Z))-\E_Z \scal{\nabla F(\dm(X|Z))}{y-\dm(X|Z)} \\
%  &+ \E_Z F(\dm(X|Z))-\E_Z\E_X[F(X)|Z]- \E_Z\E_X [\scal{\nabla F(X)}{\dm(X|Z)-X}|Z]\\
%  &= F(y)-\E_Z \scal{\nabla F(\dm(X|Z))}{y-\dm(X|Z)} \\
%  &-\E_Z\E_X[F(X)|Z]- \E_Z\E_X [\scal{\nabla F(X)}{\dm(X|Z)-X}|Z]
% \end{align*}
% Using proposition~\ref{prop:minimizers}, we have that 
% \[\nabla F(\dm (X|Z)) = \E_X [\nabla F(X)|Z]\]
% Using this in the above equality, we get
% \begin{align*}
%   \E_Z D[y \| \dm(X|Z)]+ \E_Z\E_X [D[\dm (X| Z) \| X]|Z] &= F(y)-\E_Z \scal{\E_X [\nabla F(X)|Z]}{y-\dm(X|Z)} \\
%  &-\E_Z\E_X[F(X)|Z]- \E_Z\E_X [\scal{\nabla F(X)}{\dm(X|Z)-X}|Z]\\
% &= F(y)-\E_Z \scal{\E_X [\nabla F(X)|Z]}{y-\E_Z\E_X[X|Z]}-\E_Z\E_X[F(X)|Z]\\
% &= F(y)-\E_Z\E_X[F(X)|Z]-\E_Z \scal{\E_X [\nabla F(X)|Z]}{y-\E_X[X|Z]}\\
% &= F(y)-\E_Z\E_X[F(X)|Z]-\E_Z\E_X [\scal{\nabla F(X)}{y-X}|Z]
% \end{align*}

Applying \eqref{eq:bv} to the conditional bias $\E_Z D[y \| \dm (X|Z)]$, we have
  \begin{align*}
      \E_Z \bd{y}{\dm (X\mid Z)} &= \bd[\Big] {y}{\dm [\dm (X|Z)} + \E_Z \bd[\Big]{\dm[\dm (X|Z)]}{\dm (X|Z)} \\
      &= \bd {y}{\dm X} + \E_Z \bd{\dm X}{\dm (X|Z)},
  \end{align*}
where the last equality stems from the law of iterated expectations for $\dm$, showing the equality for the bias terms. The result for the variance terms follows immediately, as conditional bias and variance have the same sum as the full bias and variance.
\end{proof}

\propconditionalfirst*
\begin{proof}
% We begin by showing that 
% \begin{equation}
% \label{eq:cond2}
% \E_Z\E_X [D[X \| y]|Z] = \E_Z D[ \E(X|Z) \| y]+ \E_Z\E_X [D[ X \| \E (X| Z)]|Z].    
% \end{equation}

% Writing out the LHS as the definition of a Bregman divergence, we have
% \begin{align*}
%     \E_Z\E_X [D[X \| y]|Z] &= \E_Z\E_X [F(X)-F(y)-\scal{\nabla F(y)}{X-y}|Z]\\
% % &= \E_Z\E_X[F(X)|Z]- F(y)-\E_Z\E_X [\scal{\nabla F(y)}{X-y}|Z]\\
%                           &= \E_Z\E_X[F(X)|Z]- F(y)- \scal{\nabla F(y)}{\E_Z\E_X[X|Z]-y}
% \end{align*}
% %      
% Similarly, we have for the RHS:
% \begin{align*}
%   RHS =&~\E_Z D[ \E(X|Z) \| y]+ \E_Z\E_X [D[ X \| \E (X| Z)]|Z]  \\
%       =&~ \E_Z \Big[F(\E(X|Z))-F(y)-\scal{\nabla F(y)}{\E(X|Z)-y}\Big|Z\Big] \\
%   &+ \E_Z\E_X\Big[F(X)-F(\E(X|Z))-\scal{\nabla F(\E(X|Z))}{X-\E(X|Z)}\Big|Z\Big]\\
% =&~  \E_Z F(\E(X|Z))-F(y)-\scal{\nabla F(y)}{\E_Z\E_X(X|Z)-y} \\
%   &+ \E_Z\E_X [F(X)|Z]-\E_ZF(\E(X|Z))-\E_Z[\scal{\nabla F(\E(X|Z))}{\E_X (X|Z)-\E(X|Z)}]\\
%  =&~  \E_Z F(\E(X|Z))-F(y)-\scal{\nabla F(y)}{\E_Z\E_X(X|Z)-y}  + \E_Z\E_X [F(X)|Z]-\E_ZF(\E(X|Z))\\
%  =&~ \E_Z\E_X[F(X)|Z]- F(y)- \scal{\nabla F(y)}{\E_Z\E_X[X|Z]-y},
% \end{align*}
% which proves the equality in \eqref{eq:cond2}.

Applying \eqref{eq:bv} to the conditional bias $\E_Z D[\E (Y|Z) \| x]$, we have
  \begin{align*}
      \E_Z\bd{\E (Y\mid Z)}{x} &= \bd[\Big]{\E [\E (Y\mid Z)]}{x} + \E_Z \bd[\Big]{\E (Y \mid Z)}{\E[\E (Y \mid Z)]} \\
      &= \bd{\E Y}{x} + \E_Z \bd{\E (Y\mid Z)}{\E Y},
  \end{align*}
where the last equality stems from the law of iterated expectations for $\E$, showing the equality for the bias terms. The result for the variance terms follows immediately, as conditional bias and variance have the same sum as the full bias and variance.
\end{proof}

\subsection{Averaging in primal and dual spaces}
\propvanillavariance*

\begin{proof} Let $D: S \times S \to \mathbb R^+$ be a Bregman divergence jointly convex in both variables. Let $\hat X = \frac 1n \sum_i X_i$, where the $X_i$ are \iid. By convexity, for any $z \in \mathcal X$,
\begin{align*}
    \bd{z}{\hat X} &\le \frac 1n \sum_i \bd{z}{X_i} \\
    \E \bd{z}{\hat X} &\le \frac 1n \sum_i \E \bd{z}{X_i} = \E \bd{z}{X} \\
    \min \E \bd{z}{\hat X} &\le \min_z \E \bd z X.
\end{align*}
As $\dm X = \argmin_z \E \bd z X$, it follows that $\E \bd{\dm X} X = \min_z \E \bd z X$, concluding the proof.
\end{proof}

\propbias*
%\begin{repproposition}{prop:bias}
%  Let $D$ be the KL divergence. There exists a distribution $\mathcal D$ over predictions $X \in \mathbb R^2$ and a label $y \in \{0, 1\}$ such that the divergence bias $D(y \| \dm[\cdot])$ satisfies
%  \begin{align*}
%      D(y \| \dm \hat X) &< D(y \| \dm X) \\
%      D(1-y \| \dm \hat X) &> D(1-y \| \dm X),
%  \end{align*}
% where as above, we define the random variable for ensemble predictions $\hat X = \frac{X_1 + \ldots + X_n}n$ and by abuse of notation, we conflate $y \in \{0, 1\}$ with its one-hot representation.
%\end{repproposition}
\begin{proof}
  For any one-hot label $y \in \{0, 1\}$ and probability vector $x$, we have $\kl \bdx y x = \log x_y$, and $\kl \bdx {1-y} x = \log(1-x_y)$. As $x \to \log 1-x$ is decreasing, it suffices to prove that there exists a distribution $\mathcal D$ such that $\kl \bdx y {\dm \hat X} \neq \kl \bdx y {\dm X}$. In fact, it suffices to prove the existence of a distribution $\mathcal D$ such that $\dm X \neq \dm \hat X$.
  
  For the cross-entropy loss, we know\footnote{See, \eg, ~\citep{yang2020rethinking}.} that $\dm X = \textup{softmax}(e^{\E \log X})$. Let $\mathcal D$ be the distribution that assigns equal probability to $x=(0.8, 0.2)$ and $x=(0.6, 0.4)$, and is zero elsewhere. The equivalent ensemble distribution assigns $1/4$ probability to $(0.8, 0.2)$ and $(0.6, 0.4)$, and $1/2$ probability to $(0.7, 0.3)$. A simple numerical computation then shows that $\dm X \neq \dm \hat X$, concluding our proof. 
\end{proof}
\propdualbias*
%\begin{repproposition}{prop:dual-bias}
%Let $D_F$ be a Bregman divergence over $S \times S$, and for any $x_1, \ldots, x_n \in S$ define 
%  \begin{equation}    
%   \label{eq:ensemble2}
%    \hat x = \Big(\frac 1n \nlsum_i x_i^*\Big)^* = \argmin_z\frac 1n \nlsum_i D(z \| x_i).
%  \end{equation}
%  Ensembling by drawing $n$ models in \iid fashion and combining them by \eqref{eq:ensemble} preserves the bias $D(y \| \dm X)$ and reduces the variance $\E D(\dm X \| X)$.
%\end{repproposition}
\begin{proof}
To preserve bias, it suffices to have $\dm \hat X = \dm X$. By definition of $\hat X$, we have
\[\dm \hat X = \Big(\E \hat X^* \Big)^* = \Big(\E \Big[\frac 1n \nlsum_i X_i^*\Big]\Big)^* = (\E X^*)^* = \dm X.\]
We now focus on the variance. Using the fact that $D_F \bdx p q = D_{F^*} \bdx{q^*}{p^*}$, we have
\begin{align*}
    \E D_F\bdx{\dm \hat X}{\hat X} &= \E D_F \bdx{\dm X}{\hat X} \\
                               &= \E D_{F^*} \bdx{\hat X^*}{(\dm X)^*} \\    
                               &= \E D_{F^*}\bdx[\Big]{\frac 1n \nlsum_i  X_i^*}{(\dm X)^*} \\
                               &\overset{(a)}{\le} \frac 1n \nlsum_i \E D_{F^*} \bdx[\Big]{X_i^*}{(\dm X)^*} \\
                               &\le \frac 1n \nlsum_i \E D_{F}\bdx{\dm X}{X_i} \\
                               &\le D_F\bdx {\dm X}{X}.
\end{align*}
where $(a)$ follows from the convexity of $D_{F^*}$ in its first argument.
\end{proof}

\end{document}